\newcommand{\cu}[1]{
	\ifcat\noexpand#1\relax
	\bm{#1}
	\else
	\mathbf{#1}
	\fi
}
\newcommand{\diff}{\mathop{}\!\mathrm{d}}
\newcommand{\expp}{\mathrm{e}}
\newcommand{\cond}{{\;|\;}}
\let\inf\relax
\let\lim\relax
\DeclareMathOperator*{\inf}{inf\,}  
\DeclareMathOperator*{\lim}{lim\,}  
\newcommand{\expecsym}{\operatorname{\mathbb{E}}}     
\newcommand{\covsym}{\operatorname{Cov}}     
\newcommand{\varrsym}{\operatorname{Var}}     
\newcommand{\diagsym}{\operatorname{diag}}     
\newcommand{\tracesym}{\operatorname{tr}}           
\let\expec\relax
\let\cov\relax
\let\varr\relax
\let\diag\relax
\let\trace\relax
\newcommand{\expec}{\@ifstar{\@expecauto}{\@expecnoauto}}
\newcommand{\@expecauto}[1]{\expecsym \left[ #1 \right]}
\newcommand{\@expecnoauto}[1]{\expecsym [#1]}
\newcommand{\expecbig}[1]{\expecsym \bigl[ #1 \bigr]}
\newcommand{\expecBig}[1]{\expecsym \Bigl[ #1 \Bigr]}
\newcommand{\expecbigg}[1]{\expecsym \biggl[ #1 \biggr]}
\newcommand{\cov}{\@ifstar{\@covauto}{\@covnoauto}}
\newcommand{\@covauto}[1]{\covsym \left[ #1 \right]}
\newcommand{\@covnoauto}[1]{\covsym [#1]}
\newcommand{\covbig}[1]{\covsym \bigl[ #1 \bigr]}
\newcommand{\varr}{\@ifstar{\@varrauto}{\@varrnoauto}}
\newcommand{\@varrauto}[1]{\varrsym \left[ #1 \right]}
\newcommand{\@varrnoauto}[1]{\varrsym [#1]}
\newcommand{\diag}{\@ifstar{\@diagauto}{\@diagnoauto}}
\newcommand{\@diagauto}[1]{\diagsym \left( #1 \right)}
\newcommand{\@diagnoauto}[1]{\diagsym (#1)}
\newcommand{\trace}{\@ifstar{\@traceauto}{\@tracenoauto}}
\newcommand{\@traceauto}[1]{\tracesym \left( #1 \right)}
\newcommand{\@tracenoauto}[1]{\tracesym (#1)}
\newcommand*{\trans}{{\mkern-1.5mu\mathsf{T}}}
\newcommand*{\R}{\mathbb{R}} 
\let\norm\relax
\DeclarePairedDelimiter{\normbracket}{\lVert}{\rVert}
\newcommand{\norm}{\normbracket}
\let\abs\relax
\DeclarePairedDelimiter{\absbracket}{\lvert}{\rvert}
\newcommand{\abs}{\absbracket}
\newcommand{\absbig}[1]{\bigl \lvert #1 \bigr\rvert}
\newcommand{\absbigg}[1]{\biggl \lvert #1 \biggr\rvert}
\newcommand{\mBesselsec}{\operatorname{K_\nu}}
\def\matern{Mat\'{e}rn }
	\newtheorem{envcounter}{EnvcounterDummy}[\thmenvcounter]
	\newtheorem{theorem}[envcounter]{Theorem}
	\newtheorem{proposition}[envcounter]{Proposition}
	\newtheorem{lemma}[envcounter]{Lemma}
	\newtheorem{corollary}[envcounter]{Corollary}
	\newtheorem{remark}[envcounter]{Remark}
	\newtheorem{example}[envcounter]{Example}
	\newtheorem{definition}[envcounter]{Definition}
	\newtheorem{algorithm}[envcounter]{Algorithm}
	\newtheorem{assumption}[envcounter]{Assumption}
\def\adl@drawiv#1#2#3{%
	\hskip-.314159\tabcolsep
	\xleaders#3{#2.5\@tempdimb #1{1}#2.5\@tempdimb}%
	#2\z@ plus1fil minus1fil\relax
	\hskip-.314159\tabcolsep}
\newcommand{\cdashlinelr}[1]{%
	\noalign{\vskip\aboverulesep
		\global\let\@dashdrawstore\adl@draw
		\global\let\adl@draw\adl@drawiv}
	\cdashline{#1}
	\noalign{\global\let\adl@draw\@dashdrawstore
		\vskip\belowrulesep}}
\newtheorem{theorem}{Theorem}
\newtheorem{lemma}[theorem]{Lemma}
\newtheorem{corollary}[theorem]{Corollary}
\newtheorem{algorithm}[theorem]{Algorithm}
\newtheorem{assumption}[theorem]{Assumption}
\begin{document}

\title{Probabilistic Estimation of Instantaneous Frequencies of Chirp Signals}

\author{Zheng Zhao,~\IEEEmembership{Member,~IEEE},~Simo S\"{a}rkk\"{a},~\IEEEmembership{Senior Member,~IEEE}, Jens Sj\"{o}lund, \\and~Thomas B. Sch\"{o}n,~\IEEEmembership{Senior Member,~IEEE}%
\thanks{Zheng Zhao, Jens Sj\"{o}lund, and Thomas B. Sch\"{o}n are with Department of Information Technology, Uppsala University, Sweden (e-mail: \href{mailto:zheng.zhao@it.uu.se}{zheng.zhao@it.uu.se}). Simo S\"{a}rkk\"{a} is with Department of Electrical Engineering and Automation, Aalto University, Finland.}%
\thanks{This research was partially supported by the Wallenberg AI, Autonomous Systems and Software Program (WASP) funded by Knut and Alice Wallenberg Foundation and by Kjell och M\"{a}rta Beijer Foundation. The computations handling was enabled by resources provided by the Swedish National Infrastructure for Computing (SNIC) at NSC partially funded by the Swedish Research Council through grant agreement no. 2018-05973. }%
\thanks{Manuscript received ** **, ****; revised ** **, ****.}}

\markboth{Journal of \LaTeX\ Class Files,~Vol.~14, No.~8, August~2021}%
{Shell \MakeLowercase{\textit{et al.}}: A Sample Article Using IEEEtran.cls for IEEE Journals}

\maketitle

\begin{abstract}
	We present a continuous-time probabilistic approach for estimating the chirp signal and its instantaneous frequency function when the true forms of these functions are not accessible. Our model represents these functions by non-linearly cascaded Gaussian processes represented as non-linear stochastic differential equations. The posterior distribution of the functions is then estimated with stochastic filters and smoothers. We compute a (posterior) Cram\'{e}r--Rao lower bound for the Gaussian process model, and derive a theoretical upper bound for the estimation error in the mean squared sense. The experiments show that the proposed method outperforms a number of state-of-the-art methods on a synthetic data. We also show that the method works out-of-the-box for two real-world datasets.
\end{abstract}

\begin{IEEEkeywords}
	chirp signal, frequency estimation, frequency tracking, instantaneous frequency, state-space methods, Gaussian process, Kalman filtering, smoothing, automatic differentiation
\end{IEEEkeywords}

\section{Introduction}
\label{sec:intro}
Chirp signals are elementary and ubiquitous objects in signal processing. We consider a real-valued chirp signal model of the form
\begin{equation}
	\begin{split}
		r(t) &= \alpha(t)\sin\biggl(\phi_0 +  2 \,\pi\int^t_0  f(s) \diff s \biggr), \\
		Y_k &= r(t_k) + \xi_k, 
	\end{split}
	\label{equ:chirp-def}
\end{equation}
where $r\colon [0,\infty)\to\R$ is the chirp signal, $\alpha\colon [0,\infty)\to\R$ is the instantaneous amplitude, $\phi_0\in\R$ is the initial phase, and $f\colon [0,\infty)\to [0,\infty)$ is the \emph{instantaneous frequency (IF)} function of the signal $r$. We assume that for any time $t_k$, for time steps $k=1,2,\ldots,T$, we measure the chirp signal, denoted by $Y_k$, corrupted by an independent additive Gaussian noise $\xi_k\sim \mathrm{N}(0, \Xi)$. Estimating the IF from measurements of a chirp signal plays a crucial role in a number of real-world applications, such as radar tracking, communications, frequency modulation, and seismic attributes.

The goal of this paper is to develop a continuous-time and probabilistic approach for jointly estimating the IF function $f$ and the chirp signal $r$. Importantly, we do not assume any parametric forms of $f$ or the amplitude $\alpha$, and we aim at estimating the probability distribution of $f$ conditioned on the measurements. Not assuming a parametric form is desirable, since in reality, it is often difficult to know the explicit forms of the instantaneous frequency and amplitude functions. For instance, when estimating the Doppler frequency in radar tracking, we need to know the dynamics of the tracked object to model $f$, which is often impossible. As another example, the amplitudes of sound recordings usually have random perturbations which do not admit a particular parametric form~\cite{Doweck2015}. 

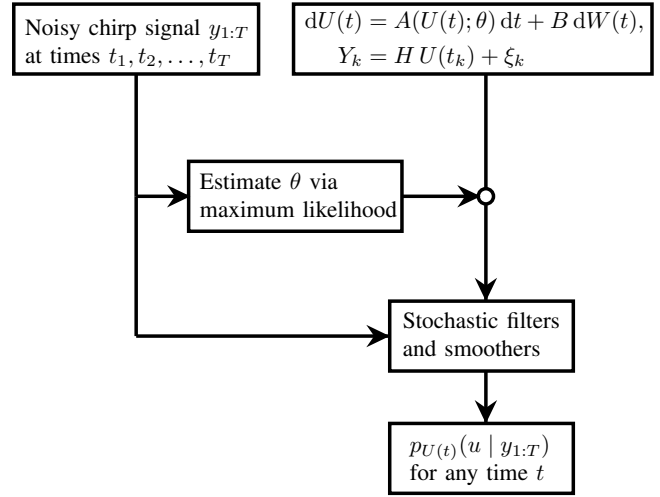
\begin{figure}[t!]
	\centering
	\resizebox{\linewidth}{!}{%
\tikzset{every picture/.style={line width=0.75pt}} 

\begin{tikzpicture}[x=0.75pt,y=0.75pt,yscale=-1,xscale=1]
	
	\draw  [line width=1.5]  (130,10) -- (270,10) -- (270,50) -- (130,50) -- cycle ;
	\draw  [line width=1.5]  (290,10) -- (495,10) -- (495,50) -- (290,50) -- cycle ;
	\draw  [line width=1.5]  (230,100) -- (352,100) -- (352,140) -- (230,140) -- cycle ;
	\draw  [line width=1.5]  (345,250) -- (450,250) -- (450,290) -- (345,290) -- cycle ;
	\draw  [line width=1.5]  (345,180) -- (450,180) -- (450,220) -- (345,220) -- cycle ;
	
	\draw [line width=1.5]    (200,50) -- (200,120) ;
	
	\draw [line width=1.5]    (200,120) -- (200,200) ;
	
	\draw [line width=1.5]    (200,200) -- (345,200) ;
	\draw [shift={(345,200)}, rotate = 180] [fill={rgb, 255:red, 0; green, 0; blue, 0 }  ][line width=0.08]  [draw opacity=0] (13.4,-6.43) -- (0,0) -- (13.4,6.44) -- (8.9,0) -- cycle    ;
	
	\draw [line width=1.5]    (200,120) -- (230,120) ;
	\draw [shift={(231,120)}, rotate = 180] [fill={rgb, 255:red, 0; green, 0; blue, 0 }  ][line width=0.08]  [draw opacity=0] (13.4,-6.43) -- (0,0) -- (13.4,6.44) -- (8.9,0) -- cycle    ;
	
	\draw [line width=1.5]    (400,220) -- (400,250) ;
	\draw [shift={(400,250)}, rotate = 270] [fill={rgb, 255:red, 0; green, 0; blue, 0 }  ][line width=0.08]  [draw opacity=0] (13.4,-6.43) -- (0,0) -- (13.4,6.44) -- (8.9,0) -- cycle    ;
	
	\draw [line width=1.5]    (352,120) -- (395,120) ;
	\draw [shift={(397,120)}, rotate = 180] [fill={rgb, 255:red, 0; green, 0; blue, 0 }  ][line width=0.08]  [draw opacity=0] (13.4,-6.43) -- (0,0) -- (13.4,6.44) -- (8.9,0) -- cycle    ;
	
	\draw [line width=1.5]    (400,50) -- (400,115) ;
	\draw [shift={(400,120)}, rotate = 90] [color={rgb, 255:red, 0; green, 0; blue, 0 }  ][line width=1.5]      (0, 0) circle [x radius= 4.36, y radius= 4.36]   ;
	
	\draw [line width=1.5]    (400,125) -- (400,180) ;
	\draw [shift={(400,180)}, rotate = 270] [fill={rgb, 255:red, 0; green, 0; blue, 0 }  ][line width=0.08]  [draw opacity=0] (13.4,-6.43) -- (0,0) -- (13.4,6.44) -- (8.9,0) -- cycle    ;
	
	\draw (135,17) node [anchor=north west][inner sep=0.75pt]   [align=left] {Noisy chirp signal $y_{1:T}$\\at times $t_1, t_2, \ldots, t_T$};
	
	\draw (295,11) node [anchor=north west][inner sep=0.75pt]    {$\begin{aligned}
			\diff U(t) &= A(U(t); \theta) \diff t + B \diff W(t), \\
			Y_k &= H \, U(t_k) + \xi_k
		\end{aligned}$};
	
	\draw (355,255) node [anchor=north west][inner sep=0.75pt]   [align=left] {$ p_{U(t)}( u \cond y_{1:T})$ \\for any time $ t$};
	
	\draw (351,185) node [anchor=north west][inner sep=0.75pt]   [align=left] {Stochastic filters \\and smoothers};
	
	\draw (235,105) node [anchor=north west][inner sep=0.75pt]   [align=left] {Estimate $ \theta $ via \\maximum likelihood};
\end{tikzpicture}
	}
	\caption{A flowchart of the proposed scheme. The state $U(t)$ encodes the chirp signal $H \, U(t)$ and its instantaneous frequency $f(t) = g(H_V \, U(t))$ jointly via a non-linear stochastic differential equation. The scope of this paper is to estimate the distribution of $U(t)$ based on the chirp measurements $y_{1:T}$. \looseness=-1}
	\label{fig:flowchart}
\end{figure}

\subsection{Contributions}
\label{sec:contributions}
The core technical contributions of the paper are the following.
\begin{itemize}
	\item We design a continuous-time probabilistic model to jointly characterise chirp signals and their instantaneous frequency functions.
	\item We propose a computationally efficient stochastic filtering and smoothing based method for probabilistic inference in this model.
	\item Additionally, we compute a (posterior) Cram\'{e}r--Rao lower bound for the model, and an upper bound for the mean-squared estimation error. 
\end{itemize}
The proposed instantaneous frequency estimation scheme is illustrated in Figure~\ref{fig:flowchart}. It is composed of two non-linearly cascaded Gaussian processes (GPs) which are represented as a non-linear system of stochastic differential equations (SDEs). The model simultaneously enables an expressive paradigm and efficient inference techniques as we view the IF estimation problem as a non-parametric GP-SDE regression problem. The rationale and upsides of using such a GP-SDE model for the IF identification are as follows:
\begin{itemize}
	\item The GP-SDE model covers a wide class of chirp and IF functions in terms of probability distributions. This implies that we do \emph{not} have to postulate strong assumptions on the functions, such as the commonly used piecewise linearity in short-time windows. The probabilistic model also allows us to quantify the uncertainty in the estimates.
	\item The GP-SDE model is flexible. The users can choose a suitable GP model depending on their applications. For instance, if we know that the true IF function is a polynomial, then, we can choose a GP kernel that generates polynomial samples~\cite{Rasmussen2006}.
	\item The GP-SDE model is in continuous-time. We can query the chirp and IF estimates at any given time (i.e., do interpolation or extrapolation).
	\item The (non-linear) SDE representation of the GP model is computationally efficient. By using stochastic filters and smoothers, we can estimate the posterior distribution of the chirp and IF with a computational complexity that is linear in the number of measurements.
	\item The parameters of the GP-SDE model can be efficiently estimated using maximum likelihood and modern automatic differentiation techniques.
\end{itemize}
To show that the proposed scheme is theoretically sound, we compute a (posterior) Cram\'{e}r--Rao lower bound and derive a mean-square upper bound for the model. The numerical experiments also show that the proposed method outperforms a number of baseline and state-of-the-art methods in terms of the estimation error. Lastly, we apply the method to compute the IFs of a gravitational wave and two European bats calls. These real-data experiments show that the method works with real-world data out-of-the-box. Our implementation of the method is publicly accessible\footnote{\url{https://github.com/spdes/chirpgp}.}.

\subsection{Related work}
\label{sec:related-works}
In this section, we review a number of existing IF estimation methods for~\eqref{equ:chirp-def}, and briefly explain how they relate to the proposed method. The simplest IF estimation scheme is arguably to use the Hilbert transform~\cite{Boashash1992}. This method first finds the instantaneous phase of the analytic representation of the signal (which is the Hilbert transform of the signal), and then computes the IF by differentiating the unwrapped instantaneous phase. However, in reality this method only works well for clean signals. 

Practitioners often use the time-windowed power spectral density method to estimate the IF~\cite[][Eq. 22]{Boashash1992}. The idea is to first estimate the power spectral densities in time windows, and then estimate the IF in each window from the first moment of the estimated power spectral density. However, the method requires a lot of manual tuning of parameters (e.g., window type, length, and overlap), and the estimation in the time-frequency domain is restricted by Gabor's uncertainty principle. This method also needs interpolation to deal with unevenly sampled measurements. In comparison, our model supports automatic parameter tuning via maximum likelihood and can be applied, unchanged, to unevenly sampled measurements, since the model is formulated in continuous time. 

There also exists a plethora of parametrisation-based methods for estimating the IF. These methods postulate a parametrised form of the IF, and then estimate its parameters using, for example, maximum likelihood~\cite{Djuric1990}. The most commonly used parametrisation is that of an affine function which results in a chirp signal with quadratic phase. For these models, it is possible to develop asymptotically exact and computationally efficient methods to estimate the parameters. As an example, reference \cite{Jensen2017} exploits the matrix structure of the non-linear least square estimator of the parameters to achieve a sub-cubic computational complexity in the number of samples. 

However, in reality, chirp signals do not always have a linear IF. To tackle more general IFs, we can heuristically define a sequence of short-time windows, and then approximate the IF by a piecewise linear function across the windows. The upside of this approach is that we can directly make use of the refined linear IF estimators, such as the ones in \cite{Jensen2017} and~\cite{Nielsen2017}. The downsides are that the IF is assumed to change slowly in time, and that the performance is limited by the uncertainty principle: for the maximum likelihood estimator to be accurate we need to use large enough time-windows, while for the piecewise linearity assumption to succeed, we need to use small enough time-windows~\cite{Norholm2016}. In addition to the window-based methods, it is also customary to use non-linear parametrisations than the linear ones, such as polynomials~\cite{Djuric1990}. However, this comes at the cost of more computations and over/mis-specification. For instance, in radar tracking and audio processing, it is challenging to find a precise parametrisation of the underlying IF. There are also many known issues of the polynomial maximum likelihood estimators, such as numerical instability, optima oscillation, aliasing, and phase unwrapping~\cite{Farquharson2006, Djurovic2018}.

Apart from the parametric approaches, it is also common to put a prior (e.g., a Gaussian process) on the underlying IF and then make use of Bayesian estimators to find the posterior distribution of the IF. This relaxes the ambiguities caused by the aforementioned parametric approaches to a certain extent, since the prior can provide a richer family of functions than deterministic parametrisations. Moreover, since signals are temporal functions, it makes sense to consider Markov priors (e.g., state-space models) for the sake of efficient computations. An early work using this idea dates back to~\cite[][pp. 99]{Snyder1968} and it was later followed up by~\cite{Scala1996}. Essentially, this class of approaches and their variants (in the discrete-time representation) use a state-space model of the form
\begin{equation}
	\begin{split}
		f(t_k) &= \eta_{k-1} \, f(t_{k-1}) + \zeta(t_{k-1}), \\
		Y_k &= \alpha \sin\biggl( \phi_0 + 2 \, \pi \sum_{i=0}^k f(t_i)  \biggr) + \xi_k,
		\label{equ:old-ekf-ss}
	\end{split}
\end{equation}
for $k=1,2,\ldots$, where $\lbrace \eta_k \rbrace_{k\geq0}$ and $\lbrace \zeta_k \rbrace_{k\geq0}$ are some scalar coefficients and Gaussian random variables, respectively, that describe the transition dynamics of the IF. However, the model in~\eqref{equ:old-ekf-ss} assumes that the chirp amplitude is constant which usually does not hold for real-world data. A closely related but different state-space model is given in~\cite{Nielsen2011},~\cite{Shi2017}, and~\cite{Shi2019}. The model additionally encodes the instantaneous amplitude and phase in its state vector which end up with a non-linear measurement model. However, the sampling-based inference scheme in~\cite{Nielsen2011} is computationally expensive, and the dynamic models in~\cite{Shi2017} and~\cite{Shi2019} do not exploit the prior information (e.g., continuity and volatility) of the underlying IF function. Furthermore, their state-space models are in discrete-time, which implicitly assumes that the IF is locally linear. 

In this paper, we employ the observation that we can simplify~\eqref{equ:old-ekf-ss} by replacing the sine function and the cumulative sum of $f$ in~\eqref{equ:old-ekf-ss} with a harmonic equation representation~\cite[see, e.g.,][pp. 2]{Scala1995}. This results in a state-space model with non-linear dynamics and linear measurements implying that we can use conventional non-linear filters, such as the extended Kalman filters (EKFs)~\cite{Scala1996} and sigma-point filters~\cite{Dardanelli2010} to approximate the posterior distribution of $f$. We derive this type of state-space model based on the deep connection between SDEs and GP models, so that we can encode the prior knowledge of the IF by selecting an appropriate GP kernel. The resulting model is a non-linear SDE that jointly correlates chirp signals and their IF functions in continuous time. In a modern interpretation, our design corresponds to a hierarchical statistical model that is a special case of the deep Gaussian processes~\cite[][pp. 82]{Zhao2021Thesis}.

\subsection{Outline of the paper}
The paper is organised as follows. In Section~\ref{sec:if-as-gp}, we show how to view the IF estimation problem as a GP-SDE regression problem. In the same section, we show how to design the GP-SDE prior appropriately so as to jointly model chirp signals and their IFs. We then show how to solve the estimation problem by using stochastic filters and smoothers. The error analysis and numerical experiments are presented in Sections~\ref{sec:error-analysis} and~\ref{sec:experiments}, respectively, followed by the conclusions in Section~\ref{sec:conclusion}.

\section{Representing chirp signal and its instantaneous frequency using a Gaussian process}
\label{sec:if-as-gp}

Gaussian processes (GPs) are ubiquitous models used, for example, in statistics and machine learning communities for modelling unknown latent functions~\cite{Rasmussen2006}. To model the chirp signal and its underlying IF, we use the following GPs. Let $X\colon [0, \infty)\to\R^{d_x}$ be a (non-stationary) GP defined by
\begin{equation}
	X(t) \sim \mathrm{GP}\big(m_X(t; f), C_X(t, t'; f)\big)
	\label{equ:gp-X}
\end{equation}
for some mean function $m_X(t; f) = \expec{X(t) \cond f}$ and covariance function $C_X(t, t'; f) = \covbig{(X(t) - m_X(t)) \, (X(t') - m_X(t'))^\trans \cond f}$, parametrised by a function $f$ which acts as the associated IF. Since the chirp signal is $\R$-valued, we use $t \mapsto H_X \, X(t)$ to represent the chirp signal under some linear transformation $H_X\colon \R^{d_x}\to\R$. As for the IF $f$, we model it to be driven by a GP as well, according to
\begin{equation}
	\begin{split}
		f(t) &\coloneqq g( V(t)),\\
		V(t) &\sim \mathrm{GP}\big(0, C_V(t, t')\big),
	\end{split}
	\label{equ:prior-V}
\end{equation}
where $V$ is a zero-mean GP that completely characterises $f$ in conjunction with a positive bijection $g\colon\R\to\R_{>0}$, for instance, $g(\cdot) = \exp(\cdot)$ or $g(\cdot) = \log(1 + \exp(\cdot))$. The reason for introducing $g$ is to ensure that the IF is positive, otherwise the estimator may not correctly identify the sign of the IF. Due to this, $f$ is technically not a GP, but its driving term $V$ is. Unless otherwise stated, we refer $f$ as a GP for simplicity. The bijection $g$ can be used for other purposes as well, for example, to introduce the carrier frequency in frequency modulations by adding a constant offset in $g$.

We let the random variable $Y_k \in \R$ represent the noisy measurement of the chirp signal at time $t_k$:
\begin{equation*}
	Y_k = H_X \, X(t_k) + \xi_k,
\end{equation*}
where $H_X\colon \R^{d_x}\to\R$ is the linear transformation that extracts the chirp from $X$, and $\xi_k \sim \mathrm{N}(0, \Xi)$. Suppose that we have a set of measurements $y_{1:T}\coloneqq \lbrace y_k \rbrace_{k=1}^T$ at hand. The goal is then to compute the joint posterior density
\begin{equation}
	p_{X(t), V(t)}(x, v \cond y_{1:T})
	\label{equ:posterior-goal}
\end{equation}
of $X(t)$ and $V(t)$ marginally for all $t\in[0, \infty)$.

In practice, solving the estimation problem in~\eqref{equ:posterior-goal} is hard. One difficulty lies in how to find a concrete pair of the mean $m_X$ and covariance $C_X$ functions such that $X$ is a reasonable prior for chirp signals. By ``reasonable'', we mean that the samples drawn from $X$ should be valid chirp signals of the form in~\eqref{equ:chirp-def}, and that the IFs of the samples should follow~\eqref{equ:prior-V}. The other difficulties consist in the intractability and expensive computation of~\eqref{equ:posterior-goal}, as the joint prior model of $X$ and $f$ is not a conventional GP but a hierarchical deep GP~\cite{Zhao2020SSDGP}.

In what follows, we give solutions to tackle these difficulties. Specifically, in Section~\ref{sec:chirp-prior}, we construct $X$ via a class of harmonic non-stationary SDE-GPs, and in Section~\ref{sec:model-formulation}, we leverage this type of SDE-GPs to formulate a continuous-discrete state-space model that we use to characterise the chirp signals. In Section~\ref{sec:filtering-smoothing}, we exemplify Gaussian filters and smoothers, and their log-likelihood for estimating the posterior distribution~\eqref{equ:posterior-goal} and the model parameters. Lastly, in Section~\ref{sec:harmonic-chirp} we show how to extend the model to tackle chirp signals containing several harmonic frequencies.

\subsection{Chirp signal prior}
\label{sec:chirp-prior}
In light of the periodic nature of chirp signals, it is natural to use the periodic covariance function in \cite[][pp. 92]{Rasmussen2006} for $C_X$ to construct the prior GP $X$ in~\eqref{equ:gp-X}. However, this periodic covariance function can only deal with constant frequencies, as it may fail to be positive definite when its frequency parameter is time-dependent. Moreover, even after we have found a valid and meaningful $C_X$, the computations required to obtain the posterior distribution~\eqref{equ:posterior-goal} are demanding because the inversion of the covariance matrix has a cubic complexity in the number of measurements. Indeed there are sparse pseudo-input methods to approximate the full-rank GP covariance matrix~\cite[e.g.,][]{Snelson2006}, however, one must be cautious in using them for IF estimation, since these sparse approximations implicitly introduce down-sampling. Moreover, the down-sampling rates are not easy to control, as the pseudo-inputs (i.e., the decimation points) are placed by an optimisation procedure~\cite{Snelson2006}. In order to approximate the posterior density~\eqref{equ:posterior-goal}, we often have to use, for instance, variational Bayes or Markov chain Monte Carlo methods which can be computationally expensive as well. These matters make it difficult to deal with long signals, such as audio recordings.

For the aforementioned reasons, we choose to construct $X$ via linear stochastic differential equations (SDEs), the solutions of which are (Markov) GPs with \emph{implicitly} defined $C_X$~\cite{Sarkka2019}. This relieves us from \emph{explicitly} designing a valid covariance function $C_X$ and from computing the covariance matrix inversion. As a result, we can compute the posterior density~\eqref{equ:posterior-goal} marginally in time without using the full covariance matrix. \looseness=-1

To see how the GP-SDE is formulated, let us first consider a simple class of harmonic GPs~\cite{Solin2014} that model sinusoidal signals with a \emph{constant} frequency $f$. These harmonic GPs are governed by a family of linear time-invariant SDEs of the form
\begin{equation}
	\begin{split}
		\diff X(t) &=
		\begin{bmatrix}
			-\lambda      & -2 \, \pi \, f \\
			2 \, \pi \, f & -\lambda
		\end{bmatrix}
		X(t) \diff t + b \diff W_X(t),\\
		X(0) &\sim \mathrm{N}\big(m^X_0, P^X_0\big),
		\label{equ:harmonic-sde}
	\end{split}
\end{equation}
where $X\colon[0,\infty)\to\R^2$ stands for the state, $W_X\colon [0,\infty)\to\R^2$ is a standard Wiener process, and $b\in\R$ is the dispersion coefficient. The parameter $\lambda$ is a positive damping constant representing the loss of energy which occurs in a number of real signals, such as seismic waves and elastic dynamics. This SDE has an analytical solution~\cite{Karatzas1991}, specifically, starting at any initial time $s\in[0,\infty)$, the solution at any $t\geq s$ is
\begin{equation}
	X(t) = F(t,s) \, X(s) + Q(t,s),
	\label{equ:harmonic-sde-solution}
\end{equation}
where
\begin{equation*}
	\begin{split}
		F(t,s) &= \exp\biggl( (t-s) 
		\begin{bmatrix}
			-\lambda      & -2 \, \pi \, f \\
			2 \, \pi \, f & -\lambda
		\end{bmatrix} \biggr) \\
		&=
		\begin{bmatrix}
			\cos((t-s) \, 2 \, \pi \,f)  & -\sin((t-s) \, 2 \, \pi \, f) \\
			\sin((t-s) \, 2 \, \pi \, f) & \cos((t-s) \, 2 \, \pi \, f)
		\end{bmatrix}\!\expp^{-\lambda (t-s)}, \\
		Q(t,s) &= b\int^t_s F(t, z) \diff W_X(z).
	\end{split}
\end{equation*}
The marginal distribution of $Q$ is Gaussian, that is, $Q(t,s) \sim \mathrm{N}(0, \Sigma(t,s))$, and its covariance is given by
\begin{equation}
	\begin{split}
		\Sigma(t,s) &\coloneqq \cov{Q(t, s)} \\
		&=b^2\int^t_s F(t, z) \, F(t, z)^\trans \diff z\\
		&= \begin{cases}
			b^2 \, \big(1 - \exp(-2\,\lambda \,(t-s))\big) \, / \, (2\,\lambda) \, I_2, & \lambda\neq 0,      \\
			b^2 \, (t-s) \, I_2,                                                        & \lambda=0,
		\end{cases}
		\nonumber
	\end{split}
\end{equation}
where $I_2\in\R^{2\times 2}$ stands for the identity matrix. In the remainder of the paper, we also interchangeably use the notation $\Sigma(\Delta)$, since this $\Sigma$ only depends on the time difference $\Delta=t-s$.

The solution process $X$ in~\eqref{equ:harmonic-sde-solution} is a suitable prior for modelling sinusoidal signals in the sense that its statistics (i.e., the mean and covariance) have a periodic structure. It is well-known~\cite{Karatzas1991} that the mean and covariance functions of $X$ are
\begin{equation*}
	m_X(t) \coloneqq \expec{X(t)} = F(t, 0) \, m^X_0
\end{equation*}
and
\begin{equation}
	\begin{split}
		C_X(t, t') &\coloneqq \cov{X(t), X(t')} \\
		&=
		\begin{cases}
			\cov{X(t)} \, F(t', t)^\trans, & t < t',    \\
			F(t, t') \cov{X(t')},          & t \geq t',
		\end{cases}\\
		\cov{X(t)} &= F(t, 0) \, P^X_0 \, F(t, 0)^\trans + \Sigma(t, 0),
	\end{split}
	\label{equ:cov-harmonic-sde}
\end{equation}
respectively. From these, we see that the mean
\begin{equation*}
	\expecBig{%
		\begin{bmatrix}
			0 & 1
		\end{bmatrix} X(t)
	} = \alpha \, \expp^{-\lambda \, t}\sin(\phi_0 + 2 \, \pi \, f \, t)
\end{equation*}
is a damped sinusoidal signal with the frequency parameter $f$, amplitude $\alpha = \norm{m^X_0}_2$, and initial phase $\phi_0 = \arctan(m^X_{0, 2}\, / \, m^X_{0, 1})$, where $m^X_{0, 1}$ and $m^X_{0, 2}$ stand for the first and second components of $m^X_0$, respectively. The covariance function $C_X$ is illustrated in Figure~\ref{fig:cov-harmonic-sde} for the frequency $f=0.5$~Hz. The figure clearly shows that $C_X$ has a periodic structure determined by $f$.

\begin{figure}[t!]
	\centering
	\includegraphics[width=.99\linewidth]{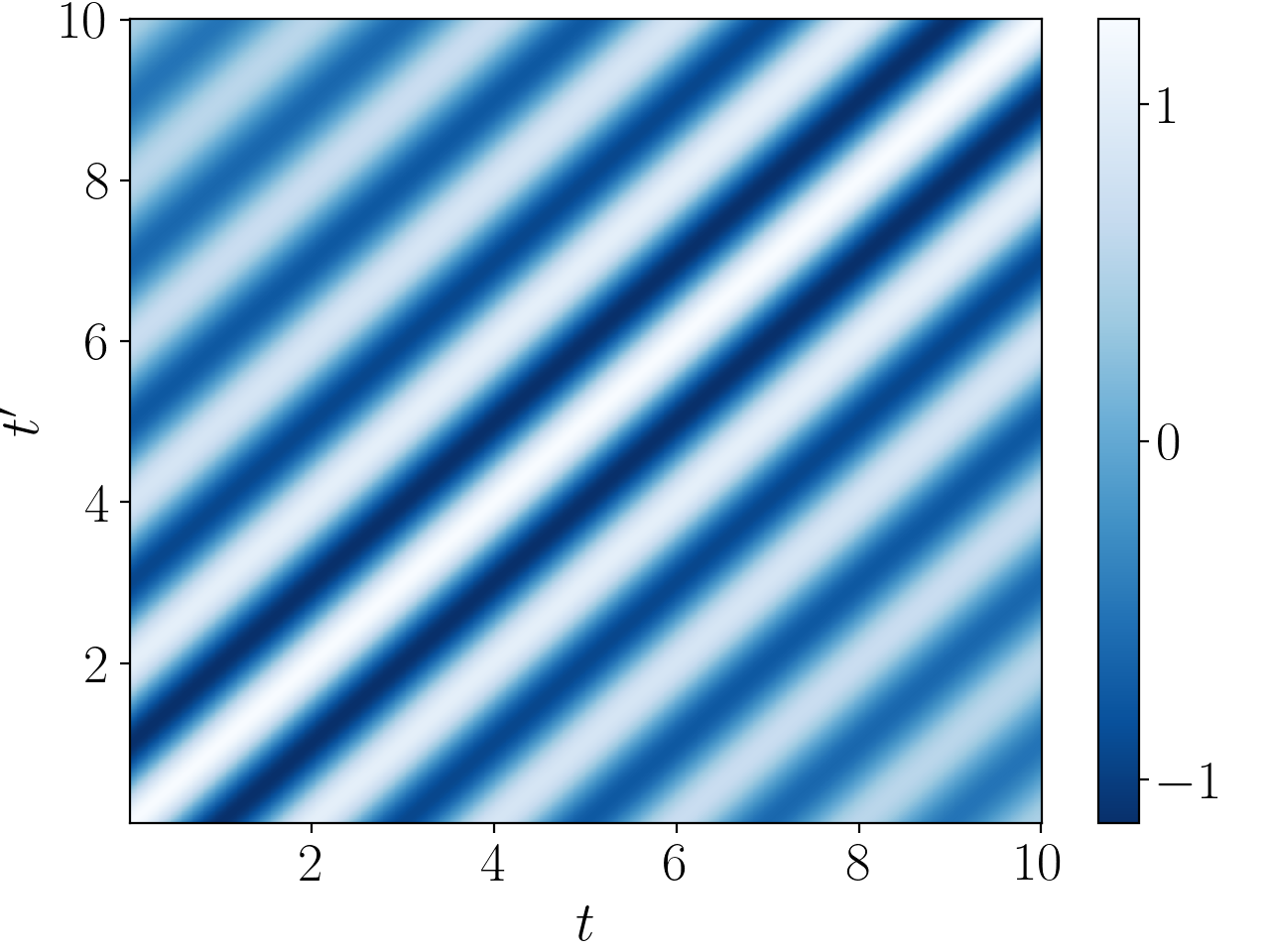}
	\caption{The covariance function $C_X$ in~\eqref{equ:cov-harmonic-sde} evaluated at Cartesian grid $[0, 10]\times [0, 10]$. The parameters used are $f=0.5$~Hz, $\lambda=0.1$, $b=0.5$, and $P^X_0=1.25 \, I_2$. The periodic structure is clearly visible, and so is the fading effect on the anti-diagonal (from top-left to bottom-right) due to the damping factor. \looseness=-1}
	\label{fig:cov-harmonic-sde}
\end{figure}

\subsection{GP-SDE for chirp signals and instantaneous frequencies}
\label{sec:model-formulation}
The SDE construction of $X$ in~\eqref{equ:harmonic-sde} allows us to use time-dependent instantaneous frequency by directly substituting its constant frequency parameter $f$ with a time-varying one, for instance, the GP $t\mapsto f(t)$ defined by~\eqref{equ:prior-V}. In this case, the conditional mean of $X$ becomes $\expec{X(t) \cond f(t)} = \alpha \, \expp^{-\lambda \, t} \begin{bmatrix} \cos(\phi_0 + 2 \, \pi \int^t_0 f(s) \diff s) & \sin(\phi_0 + 2 \, \pi \int^t_0 f(s) \diff s)\end{bmatrix}$ which has the model in~\eqref{equ:chirp-def} as a special case. As for the conditional covariance, it is not available in closed-form. However, we will show a numerically computed example of the conditional covariance in Figure~\ref{fig:cov-cond-f}.

Let us now consider the construction $f(t) = g(V(t))$ as defined in~\eqref{equ:prior-V}. In order to be consistent with the SDE construction of $X$, we need to represent $V$ via an SDE as well. Accordingly, we construct $V$ from an SDE representation $\overline{V}\colon [0, \infty) \to \R^{d_v}$ governed by
\begin{equation}
	\begin{split}
		\diff
		\overline{V}(t)
		&=
		M \, \overline{V}(t) \diff t + L \diff W_V(t), \\
		\overline{V}(0) &\sim \mathrm{N}\big(0, P^V_0\big),
		\label{equ:sde-of-V}
	\end{split}
\end{equation}
where $W_V\colon [0, \infty)\to\R^{d_w}$ is another standard Wiener process that is independent of $W_X$. We extract $V$ from the state $\overline{V}$ by $V(t) = \overline{H}_V \, \overline{V}(t)$ for some linear transformation $\overline{H}_V\colon \R^{d_v} \to \R$. \looseness=-1

The SDE coefficients $M\in\R^{d_v \times d_v}$ and $L\in\R^{d_v \times d_w}$ need to be chosen appropriately to model the underlying IF at hand. One useful choice is to let $V$ be a Mat\'{e}rn GP. The \matern GPs are generic priors for modelling continuous functions with varying degree of regularity, and their SDE representations are available in closed form~\cite{Sarkka2019}. As an example, suppose that the latent IF is continuously differentiable, then we can let
\begin{align}
	M &=
	\begin{bmatrix}
		0                & 1                           \\
		-3 \, /\, \ell^2 & -2 \, \sqrt{3} \, / \, \ell
	\end{bmatrix}, \quad
	L =
	\begin{bmatrix}
		0 \\ 2 \, \sigma \,(\sqrt{3} \, / \, \ell)^{3 \, / \, 2}
	\end{bmatrix},\nonumber\\
	P^V_0 &=
	\begin{bmatrix}
		\sigma^2 & 0                            \\
		0        & 3 \, \sigma^2 \, / \, \ell^2
	\end{bmatrix}, \qquad\,\,\,
	\overline{H}_V =
	\begin{bmatrix}
		1 & 0
	\end{bmatrix},
	\label{equ:sde-matern32}
\end{align}
and $\overline{V}(t) = \begin{bmatrix} V(t) & \diff V(t) \, / \diff t\end{bmatrix}^\trans \in\R^2$, so that $V$ is a Mat\'{e}rn ($\nu=3 \, / \, 2$) GP with the covariance function
\begin{equation*}
	\begin{split}
		C_V(t,t') &= \frac{\sigma^2 \, 2^{1-\nu}}{\Gamma(\nu)} \,\psi(t,t')^\nu \, \mathrm{K}_\nu\bigl(\psi(t,t')\bigr), \\
		\psi(t,t') &\coloneqq \frac{\sqrt{2 \, \nu} \, \abs{t-t'}}{\ell},
	\end{split}
\end{equation*}
where $\ell$ and $\sigma$ are the length and magnitude scale parameters (i.e., they determine the horizontal and vertical degrees of change of $V$), $\nu$ defines the smoothness of $V$, and $\mBesselsec$ is the modified Bessel function of the second kind.

For simplicity of exposition, we will keep using the Mat\'{e}rn $3\, / \, 2$ setting as in~\eqref{equ:sde-matern32} in the remainder of the paper. However, it is straightforward to use other classes of GPs as well by changing the SDE coefficients in~\eqref{equ:sde-of-V} accordingly. For example, if in an application we know that the IF is a rational function, then we can use the SDE representation of a rational quadratic GP to design the SDE coefficients.

\begin{figure}[t!]
	\centering
	\includegraphics[width=.99\linewidth]{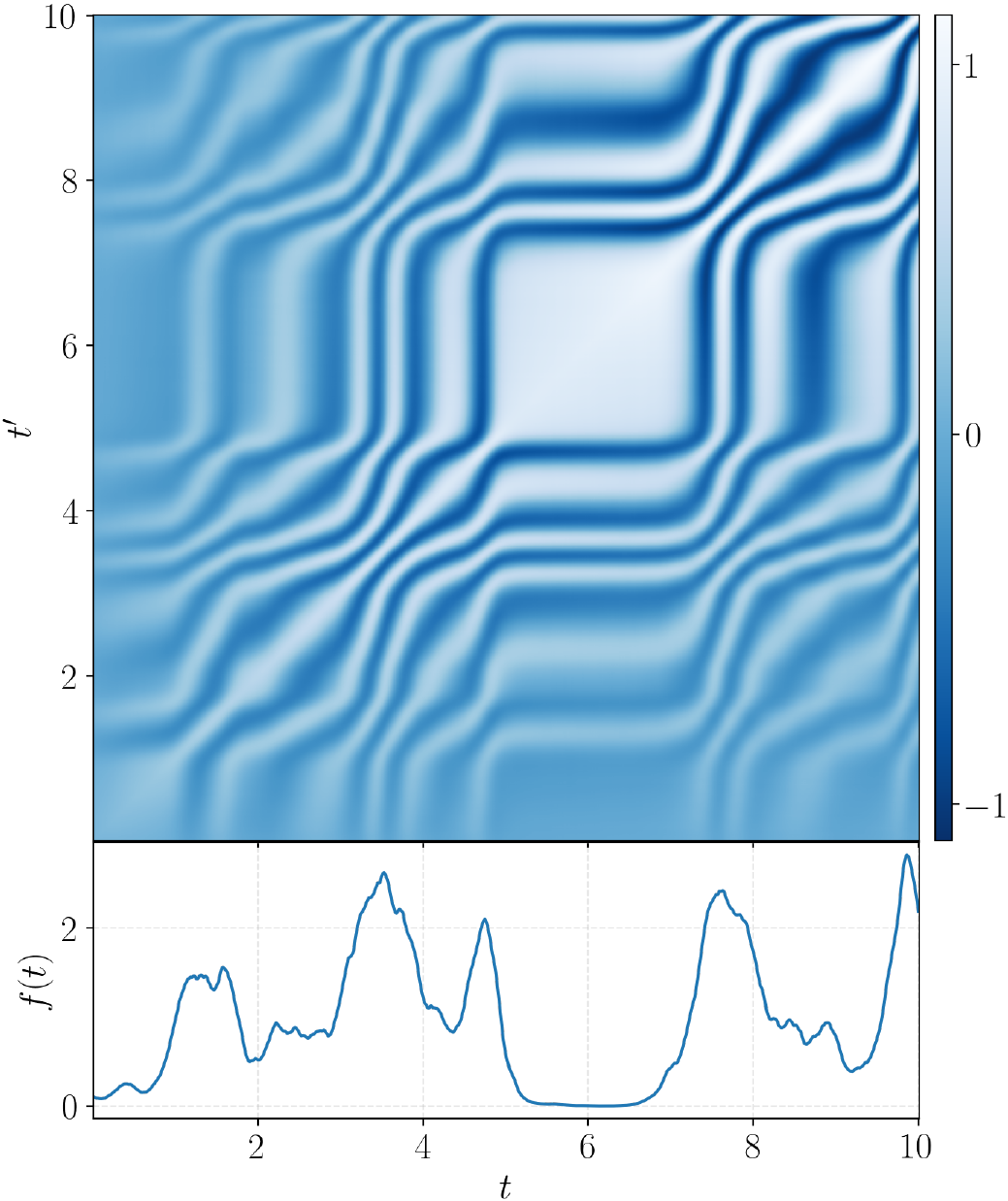}
	\caption{Top: the covariance function of $H \, U$ in the SDE~\eqref{equ:sde-chirp} conditioned on a realisation of $f$. Bottom: the realisation of $f$. From the plot we can see that the periodicity changes over time driven by the value of $f$. Note that this covariance function has been approximately computed using Monte Carlo, since it is not analytically tractable.}
	\label{fig:cov-cond-f}
\end{figure}

\begin{figure*}[t!]
	\centering
	\includegraphics[width=.99\linewidth]{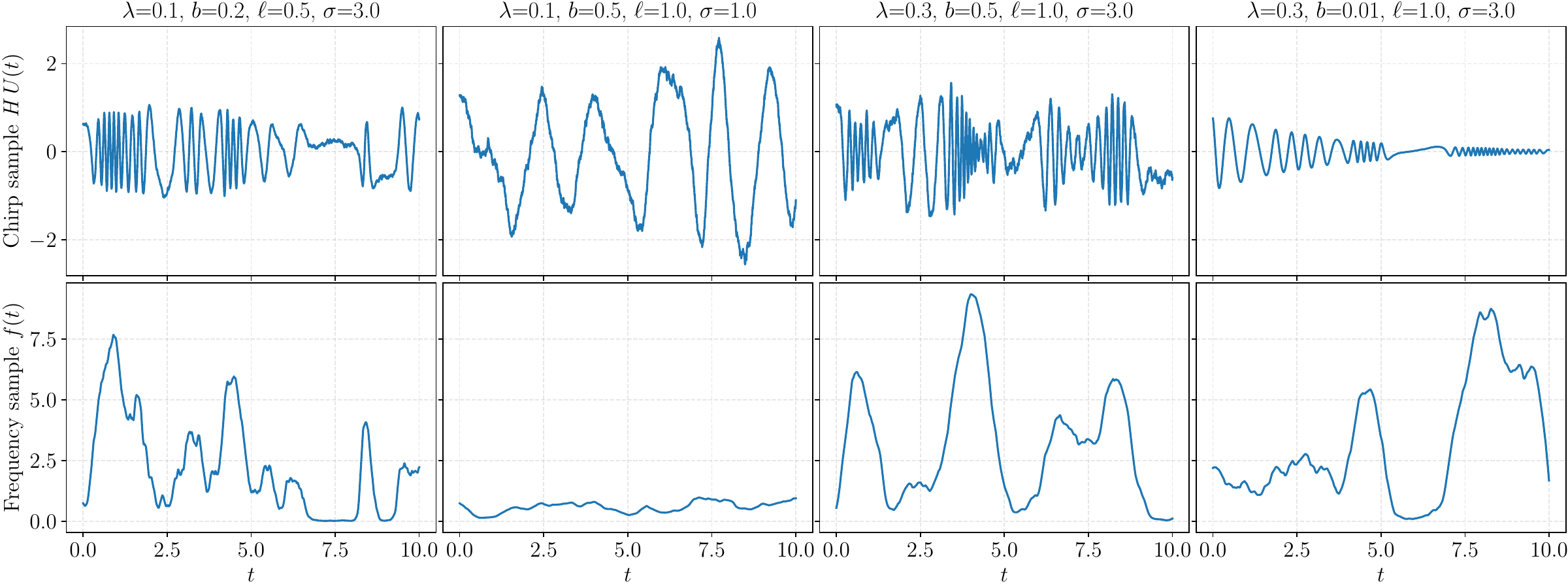}
	\caption{Samples drawn from the SDE in~\eqref{equ:sde-chirp} with four combinations of the model parameters. We see that the chirp frequency changes instantaneously based on the value of $f$, and that the model can be used to generate a rich variety of randomised chirp signals by tuning the model parameters. In this example, we choose the bijection in the way that $f(t) = \log(1 + \exp(V(t)))$.}
	\label{fig:chirp-sde-samples}
\end{figure*}

Now let us substitute the parameter $f$ in~\eqref{equ:harmonic-sde} with $f(t)=g(V(t))$, and define $U(t) \coloneqq \begin{bmatrix} X(t)^\trans & \overline{V}(t)^\trans\end{bmatrix}^\trans \in\R^4$ as the joint state by stacking $X(t) \in\R^2$ and $\overline{V}(t)\in\R^2$. We get the following non-linear state-space chirp IF estimation model:
\begin{equation}
	\begin{split}
		\diff U(t) &=
		A\big(U(t)\big)\diff t + B \diff W(t),\\
		U(0) &\sim p_{U(0)}(u), \\
		Y_k &= H \, U(t_k) + \xi_k,
	\end{split}
	\label{equ:sde-chirp}
\end{equation}
where the drift and dispersion functions are defined by
\begin{equation*}
	\begin{split}
		&A(U(t)) \\
		&\coloneqq
		\begin{bmatrix}
			-\lambda            & -2 \, \pi \, g(V(t)) & 0                & 0                           \\
			2 \, \pi \, g(V(t)) & -\lambda             & 0                & 0                           \\
			0                   & 0                    & 0                & 1                           \\
			0                   & 0                    & -3 \, /\, \ell^2 & -2 \, \sqrt{3} \, / \, \ell
		\end{bmatrix} U(t)
	\end{split}
\end{equation*}
and
\begin{equation*}
	B \coloneqq
	\begin{bmatrix}
		b & 0 & 0 & 0\\
		0 & b & 0 & 0\\
		0 & 0 & 0 & 0\\
		0 & 0 & 0 & 2 \, \sigma \,(\sqrt{3} \, / \, \ell)^{3 \, / \, 2}
	\end{bmatrix},
\end{equation*}
respectively, and the Wiener process $W(t)\in\R^4$ stacks the independent processes $W_X$ and $W_V$. The measurement operator $H\coloneqq\begin{bmatrix} 0 & 1 & 0 & 0 \end{bmatrix}$ extracts the second component of $X$ from $U$ so as to produce chirp signals of the form in~\eqref{equ:chirp-def}. The initial distribution can be assigned as a Normal $p_{U(0)}(u) = \mathrm{N}(u \cond m_0, P_0)$, where $m_0$ and $P_0$ are given by the initial means and covariances of $X$'s and $V$'s.

To see that the SDE in~\eqref{equ:sde-chirp} is an appropriate prior for modelling chirp signals with time-dependent IF, we can investigate the statistics of the SDE. In the beginning of this section, we have already shown that the conditional mean $\expec{H \, U(t) \cond f(t)}$ is a valid chirp signal of the form in~\eqref{equ:chirp-def}. Figure~\ref{fig:cov-cond-f} then shows the covariance function of $H \, U$ conditioned on a realisation of $f$. Compared to the covariance function $C_X$ with a fixed frequency plotted in Figure~\ref{fig:cov-harmonic-sde}, we see that the (white) stripes in Figure~\ref{fig:cov-cond-f} wiggle in time based on the value of $f$. This reflects the change of instantaneous periodicity driven by the process $f$. \looseness=-1

Another way to perceive the statistics of the SDE in~\eqref{equ:sde-chirp} is by inspecting its samples. It is worth noting that the commonly used Euler--Maruyama scheme is unstable for simulating this SDE due to the non-linearity and the stiffness of~\eqref{equ:harmonic-sde}. However, we can instead use higher-order methods, such as It\^{o}--Taylor expansions~\cite{ZhaoTME2020}, or the locally conditional discretisation (LCD)~\cite[][pp. 77]{Zhao2021Thesis}. The LCD method is particularly efficient for our SDE because it exploits the hierarchical structure of the SDE. In Figure~\ref{fig:chirp-sde-samples}, we exemplify a few samples drawn from the SDE~\eqref{equ:sde-chirp} by using the LCD method under four combinations of the model parameters (i.e., $\lambda, b, \ell$, and $\sigma$). From this figure it is clear that we can generate a rich variety of chirp signals and IFs by tuning the model parameters. The parameters $\lambda$ and $b$ control the damping factor and stochastic volatility of the chirp, while $\ell$ and $\sigma$ control the characteristics of the IF. For best performance, it is desirable to estimate these parameters from the chirp measurements to find the best fit to the particular chirp at hand.

\subsection{Estimate the posterior distribution and model parameters}
\label{sec:filtering-smoothing}
Computing the posterior probability density
\begin{equation}
	p_{U(t)}(u \cond y_{1:T}), \quad \text{for all } t \in[0,\infty),
	\label{equ:posterior-goal-2}
\end{equation}
for the state-space model in~\eqref{equ:sde-chirp} is equivalent to solving a (continuous-discrete) stochastic filtering and smoothing problem~\cite{Sarkka2019} on that model. The density can be computed in both continuous-time and discrete-time, but the continuous-time solution requires solving the Kushner partial differential equation. Although this posterior distribution is intractable also in the discrete-time case, there exists a number of approximate schemes, such as Gaussian filters and smoothers~\cite{Sarkka2019}, and particle filters and smoothers~\cite{Chopin2020}. Furthermore, estimating the model parameters can be accomplished by optimising the marginal log-likelihood computed by the filters.

At this stage, it is worth mentioning that this framework for estimating chirp IF is similar to that of~\cite{Scala1996}, except that our model is continuous in time, and that we formulate the model by using the GP tools that characterise a richer family of chirp signals and IFs. More specifically, the model in~\cite{Scala1996} is a special case of our SDE~\eqref{equ:sde-chirp} when choosing $b=0$ and when discretised using the LCD discretisation.

For the sake of pedagogy and the error analysis in Section~\ref{sec:error-analysis}, we show a concrete solution to the posterior distribution~\eqref{equ:posterior-goal-2} using the generic Gaussian filters and smoothers (GFSs)~\cite{Sarkka2013}. We choose GFSs for the demonstration because they generalise the extended and sigma-points Kalman filters and smoothers which are widely used in the signal processing community. These are shown in the following algorithm. 

\begin{algorithm}[Gaussian filtering and smoothing for the chirp and IF estimation model in~\eqref{equ:sde-chirp}]
	\label{alg:gaussian-filter-smoother}
	Find a Gaussian approximation to the SDE solution in~\eqref{equ:sde-chirp} in discrete-time such that
	\begin{equation*}
		\begin{split}
			U(t_k) &\approx \Phi(U(t_{k-1})) + \omega(U(t_{k-1})), \\
			\omega(U(t_{k-1})) &\sim \mathrm{N}(0, \Omega(U(t_{k-1}))),
		\end{split}
	\end{equation*}
	where $\Phi(u_{k-1}) \coloneqq \expec{U(t_k) \cond u_{k-1}}$ and $\Omega(u_{k-1}) \coloneqq \cov{U(t_k) \cond u_{k-1}}$ stand for the conditional mean and covariance of the SDE solution $U$, respectively. See, for example, \cite{Sarkka2019} for how to do so.

	Gaussian filters and smoothers approximate $p(u_k \cond y_{1:k}) \approx \mathrm{N}\big( u_k \cond m_k, P_k \big)$, $p(u_k \cond y_{1:T}) \approx \mathrm{N}\big( u_k \cond m^s_k, P^s_k \big)$, and $p(u_0) = \mathrm{N}(u_0 \cond m_0, P_0)$. The Gaussian filter computes the filtering estimates $\lbrace m_k, P_k \rbrace_{k=1}^T$ by
	\begin{equation*}
		\begin{split}
			m^-_k &= \int  \Phi(u_{k-1})\, \mathrm{N}(u_{k-1} \cond m_{k-1}, P_{k-1}) \diff u_{k-1}, \\
			P^-_k &= \int  \big(\Omega(u_{k-1}) + \Phi(u_{k-1}) \, \Phi(u_{k-1})^\trans\big)\\
			&\qquad\times\mathrm{N}(u_{k-1} \cond m_{k-1}, P_{k-1}) \diff u_{k-1} - m^-_k \, (m^-_k)^\trans, \\
			S_k &= H \, P^-_k \, H^\trans + \Xi, \\
			K_k &= P^-_k \, H^\trans \, / \,S_k, \\
			m_k &= m^-_k + K_k \, (y_k - H \, m^-_k), \\
			P_k &= P^-_k - K_k \, S_k \, K_k^\trans,
		\end{split}
	\end{equation*}
	for $k=1,2,\ldots, T$. Define $m^s_T\coloneqq m_T$ and $P^s_T\coloneqq P_T$. The Gaussian smoother computes the smoothing estimates $\lbrace m^s_k, P^s_k \rbrace_{k=1}^{T-1}$ by
	\begin{equation*}
		\begin{split}
			D_{k+1} &= \int u_k \, \Phi(u_k)^\trans \mathrm{N}(u_k \cond m_k, P_k) \diff u_k - m_k \, (m^-_{k+1})^\trans,\\
			G_k &= D_{k+1} \, \big( P^-_{k+1} \big)^{-1}, \\
			m^s_k &= m_k + G_k \, (m^s_{k+1} - m^-_{k+1}), \\
			P^s_k &= P_k + G_k \, (P^s_{k+1} - P^-_{k+1}) \, G_k^\trans,
		\end{split}
	\end{equation*}
	for $k=T-1, T-2, \ldots, 1$. Depending on the application, we can use, for example, the first-order Taylor expansion (i.e., EKFSs) or sigma-points (e.g., unscented transform) to approximate the integrals above. The (approximate) negative log-likelihood for optimising the parameter $\theta$ (e.g., $\lambda, b, \ell$, and $\sigma$) is given by
	\begin{equation}
		-\sum^T_{k=1}\log \mathrm{N}\bigl(y_k \cond H \, m^-_k, S_k; \theta\bigr).
		\label{equ:gfs-log-likelihood}
	\end{equation}
\end{algorithm}

The negative log-likelihood in~\eqref{equ:gfs-log-likelihood} depends on the parameters $\theta$ through the filtering recursions which in turn, are also complicated functions of the parameters. Deriving closed-form expressions for the gradients (or Hessian for that matter) as required for numerical optimisation is thus extremely challenging. Fortunately, however, it is also unnecessary because they can be computed efficiently using well-established techniques for automatic differentiation, as made available in a number of popular software libraries, such as JAX and TensorFlow. 

\subsection{Modelling multiple chirps}
\label{sec:harmonic-chirp}
In reality, we also often encounter chirp signals with harmonic frequency components. These signals are of the form
\begin{equation}
	\sum^J_{j=1}\alpha_j(t)\sin\biggl(\phi_{0, j} +  2 \,\pi \, j\int^t_0  f(s) \diff s \biggr) + \xi_k, 
	\label{equ:chirp-harmonic}
\end{equation}
where $J$ is the number of harmonics (including the fundamental IF), and $\alpha_j$ and $\phi_{0, j}$ are the $j$-th instantaneous amplitude and initial phase, respectively. We can handily build a model for the harmonic chirp signals by extending the GP-SDE in~\eqref{equ:sde-chirp}. More specifically, we only need to duplicate the chirp SDE for $X$ by $J$ times independently, which gives $X_1, X_2, \ldots, X_J$, and then stack them together with the SDE of $\overline{V}$. This results in an augmented SDE
\begin{equation*}
	\begin{split}
		\diff \cu{U}(t) &= \cu{A}(\cu{U}(t)) \diff t + \cu{B} \diff \cu{W}(t), \\
		Y_k &= \cu{H} \, \cu{U}(t_k) + \xi_k, 
	\end{split}
\end{equation*}
where
\begin{equation*}
	\begin{split}
		\cu{U}(t) &= 
		\begin{bmatrix}
			X_1(t) & \cdots & X_J(t) & \overline{V}(t)
		\end{bmatrix}^\trans \in \R^{2\,J + 2}, \\ 
	\cu{A}(\cu{U}(t)) &= \mathrm{blkdiag}(A_1(\cu{U}(t)), \ldots, A_J(\cu{U}(t)), M) \, \cu{U}(t),  \\
	A_j(\cu{U}(t)) &= 
	\begin{bmatrix}
		-\lambda_j & -2 \, \pi \, j \, g(V(t)) \\
		2 \, \pi \, j \, g(V(t)) & -\lambda_j
	\end{bmatrix}, \\
	\cu{B} &= \mathrm{blkdiag}(b, b, \ldots, b, b, L) \in\R^{(2 \, J + 2) \times (2 \, J + 2)}, \\
	\cu{W}(t) &= 
	\begin{bmatrix}
		W_X(t) & \cdots & W_X(t) & W_V(t)
	\end{bmatrix}^\trans \in \R^{2\,J + 2}, \\
	\cu{H} &= 
	\begin{bmatrix}
		0 & 1 & \cdots & 0 & 1 & 0 & 0
	\end{bmatrix} \in\R^{2\,J + 2} .
	\end{split}
\end{equation*}
The same routine also works for chirp signals that have multiple fundamental IFs. Suppose that the chirp signal at hand has $R$ fundamental IFs $f_1, f_2, \ldots, f_R$. Then we additionally duplicate the SDE of $\overline{V}$ by $R$ times independently, which gives $\overline{V}_1, \overline{V}_2, \ldots, \overline{V}_R$. The resulting new state $\cu{U}(t) \in \R^{(2 \, J + 2) \, R}$ is a vector formed by stacking $X_{1, 1}(t), \ldots, X_{1, J}(t), \ldots, X_{R, 1}(t),  \ldots ,  X_{R, J}(t), \, \overline{V}_1(t), \ldots, \\\overline{V}_R(t)$. Although the stochastic filters and smoothers still apply, the dimension of the state now grows to $(2 \, J + 2) \, R$ which might be computationally demanding when $J$ and $R$ are large.

\section{Error analysis}
\label{sec:error-analysis}
In this section, we analyse the mean-square error of the IF estimates that follow from using the GP-SDE model introduced in Section~\ref{sec:if-as-gp}. Specifically, we first derive an upper bound of the estimation error to show that the error stays finite in time. Then, we compute a Cram\'{e}r--Rao lower bound for the error which bounds the best estimation result that any estimator can achieve.

\subsection{Mean-square upper bound}
Recall that our IF estimation method amounts to solving a filtering and smoothing problem, and that we have a plenty of filters and smoothers to choose from. Therefore, we now narrow down our scope and focus on the particular, but useful, Gaussian-based filter defined in Algorithm~\ref{alg:gaussian-filter-smoother}. In the literature, there are already a number of mean-square upper bounds of Gaussian filters~\cite{Toni2020}. However, these results are not always informative to our application because they assume that the underlying state-space models are correct. In other words, these classical results are not concerned with whether the state-space model is realistic or not. Thus, to take this into account, we analyse the estimation error when we input the estimator with measurements from a given class of chirp signals and IFs. Formally, the chirp measurement we consider is of the form\looseness=-1
\begin{equation}
	Y_k = \sin\biggl(2 \, \pi \int^{t_k}_0 f(s) \diff s\biggr) + \xi_k,
	\label{equ:err-bound-attack-measurement}
\end{equation}
where $f$ is the true given IF function that we want to estimate. When we input the measurements $\lbrace Y_1, Y_2,\ldots, Y_k \rbrace$ to the estimator, we would like to find a finite bound on the mean squared error
\begin{equation*}
	\expecbig{\abs{f(t_k) - g(H_V \, m_k)}^2},
\end{equation*}
where $H_V$ is the operator that extracts $V$'s estimate from $m_k$ (e.g., $H_V=\begin{bmatrix}0 & 0 & 1 & 0\end{bmatrix}$ when using the Mat\'{e}rn $3 \, / \, 2$ prior). However, this error is difficult to analyse due to the non-linear bijection $g$ wrapping the estimates, and therefore, we transform the analysis into the domain of $V$. The mean squared error that we are interested in now becomes
\begin{equation*}
	\expecbig{\abs{ g^{-1}(f_k) - H_V \, m_k }^2}.
\end{equation*}

In order to carry out the analysis, we also need to fix a representation of $\Phi$ and $\Omega$ in Algorithm~\ref{alg:gaussian-filter-smoother}. Thanks to the hierarchical structure of the SDE~\eqref{equ:sde-chirp}, we can apply the locally conditional discretisation (LCD)~\cite{Zhao2021Thesis}. This LCD scheme approximates $\Phi$ and $\Omega$ by
\begin{align}
	\Phi(u_{k-1})   & =
	\mathrm{blkdiag}\bigg(%
	\expp^{-\Delta_k \, \lambda}\begin{bmatrix}
		\cos(\phi_{k-1}) & -\sin(\phi_{k-1}) \\
		\sin(\phi_{k-1}) & \cos(\phi_{k-1})
	\end{bmatrix}, \nonumber                                                           \\
	                & \qquad\qquad\quad\,\,                                                                                     %
	\expp^{\Delta_k \, M}
	\bigg) \, u_{k-1},\nonumber                                                                                                 \\
	\Omega(u_{k-1}) & = \Omega_{k-1} = \mathrm{blkdiag}\big( \Sigma(\Delta_k), \Lambda(\Delta_k)\big),  \label{equ:disc-lcd} \\
	\phi_{k-1}      & \coloneqq \Delta_k \, 2 \, \pi \, g(H_V \, u_{k-1}),\nonumber                                             \\
	\Delta_k        & \coloneqq t_{k} - t_{k-1}.\nonumber
\end{align}
The exact formulae of $\expp^{\Delta_k \, M}$ and $\Lambda(\Delta_k)$ under the Mat\'{e}rn covariance function in~\eqref{equ:sde-matern32} are shown in Appendix~\ref{append:matern32}. It is worth noting again that the work in~\cite{Scala1996} is a special case of~\eqref{equ:disc-lcd} under this particular LCD discretisation scheme and a specific parameter choice. 

The main result (i.e., the error upper bound) is given in Theorem~\ref{thm:error}. In order to arrive at the result, we use the following assumptions.

\begin{assumption}[Regularity of IF and prior]
	\label{assump:regularity-prior-if}
	There exist a non-negative function $z$ and constant $c \geq 0$ such that at each $k$,
	\begin{equation*}
		\absbig{g^{-1}(f_k) - \overline{H}_V \, \expp^{\Delta_k \, M} \, x}^2 \leq z(\Delta_k) \absbig{g^{-1}(f_{k-1}) - \overline{H}_V \, x}^2 + c
	\end{equation*}
	for all $x\in\R^{d_v}$, where $\overline{H}_V \in \R^{d_v}$ is defined in Section~\ref{sec:model-formulation}.
\end{assumption}

Assumption~\ref{assump:regularity-prior-if} confines the class of IF functions and bijections that this error analysis is dealing with. Essentially, this assumption means that for any pair of $(x, g^{-1}(f_{k-1}))$ at $t_{k-1}$, the error (between $x$ and the true IF value $g^{-1}(f_{k-1})$) should not grow significantly (controlled by $z$ and $c_k$) to time $t_k$ when the prior makes a prediction based on $x$. This makes sense because one must choose the prior for the IF appropriately in order to best describe the latent IF. Note that we also have the freedom to choose a positive bijection $g$ to satisfy this assumption.\looseness=-1

To see that Assumption~\ref{assump:regularity-prior-if} is not restrictive, we can enumerate a few realistic examples that satisfy the assumption. Suppose that the true IF is $f(t) = \sin(t) + \epsilon$ for some base frequency $\epsilon>1$, and that we choose $g(\cdot) = \exp(\cdot)$ and $M=-1$ (i.e., a $d_v=1$ Mat\'{e}rn $1 \, / \, 2$ prior). Then there exists a constant $c$ (independent of both $k$ and $\Delta_k$) and a function $z(\Delta_k) = \expp^{\Delta_k \, M}$ that satisfy this assumption. Moreover, it is not hard to manipulate $\Delta_k$ or $M$ so that $z(\Delta_k) < 1/3$ in order to obtain a contractive error bound as in Corollary~\ref{corollary:simple-bound}.

\begin{assumption}
	\label{assump:trace-class}
	There exist constants $c_P$ and $c_{\overline{P}}$ such that $\trace{P_k} \leq c_P$ and $\norm{P^-_k}_2^2 \leq c_{\overline{P}}$ almost surely for all $k\geq 1$.
\end{assumption}

Assumption~\ref{assump:trace-class} requires that the filtering and prediction covariances are finitely bounded. This assumption is indeed strong, in the sense that it is hard to verify in practice. On the other hand, relaxing this assumption is difficult because the evolution of these covariances is non-linearly coupled with that of the posterior mean estimates and measurements, while the evolution of the posterior mean estimates depends on the covariances too. This is a common problem in analysing the errors of non-linear filters, and this type of assumptions is routinely used in the literature~\cite{Toni2020}.

\begin{lemma}
	\label{lemma:expectation-of-phi}
	For any mean $\mu\in\R^{d_u}$ and covariance $\Theta\in\R^{{d_u}\times {d_u}}$, the conditional mean $\Phi$ is such that
	\begin{equation}
		\int \norm{\Phi(u)}_2^2 \, \mathrm{N}(u \cond \mu, \Theta) \diff u  \leq \norm{\mu}_2^2 + \trace{\Theta}.
	\end{equation}
\end{lemma}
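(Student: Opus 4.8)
The plan is to reduce the integral inequality to a deterministic, pointwise contraction estimate on $\Phi$, after which the bound follows from a standard Gaussian second-moment identity. Concretely, for $U \sim \mathrm{N}(\mu, \Theta)$ one has $\int \norm{u}_2^2 \, \mathrm{N}(u \cond \mu, \Theta) \diff u = \expec{\norm{U}_2^2} = \norm{\mu}_2^2 + \trace{\Theta}$, since $\expec{U \, U^\trans} = \Theta + \mu \, \mu^\trans$ and the trace is linear. Hence it suffices to prove the pointwise estimate $\norm{\Phi(u)}_2 \leq \norm{u}_2$ for every $u \in \R^{d_u}$; monotonicity of the integral then gives $\int \norm{\Phi(u)}_2^2 \, \mathrm{N}(u \cond \mu, \Theta) \diff u \leq \int \norm{u}_2^2 \, \mathrm{N}(u \cond \mu, \Theta) \diff u = \norm{\mu}_2^2 + \trace{\Theta}$, which is exactly the claim.

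To establish the pointwise estimate I would exploit the block-diagonal form of $\Phi$ in the LCD representation~\eqref{equ:disc-lcd}. Writing $u = \begin{bmatrix} x^\trans & \overline{v}^\trans\end{bmatrix}^\trans$ with $x \in \R^2$ the harmonic part and $\overline{v}$ the IF part, the two diagonal blocks act independently, so $\norm{\Phi(u)}_2^2 = \expp^{-2 \, \Delta_k \, \lambda} \norm{R(\phi_{k-1}) \, x}_2^2 + \norm{\expp^{\Delta_k \, M} \, \overline{v}}_2^2$, where $R(\phi_{k-1})$ denotes the planar rotation appearing in~\eqref{equ:disc-lcd}. The first term is harmless: a rotation is an isometry, so $\norm{R(\phi_{k-1}) \, x}_2 = \norm{x}_2$, and since $\lambda \geq 0$ and $\Delta_k \geq 0$ we have $\expp^{-2 \, \Delta_k \, \lambda} \leq 1$, giving a contribution bounded by $\norm{x}_2^2$. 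Crucially, this block is non-expansive regardless of the state-dependent rotation angle $\phi_{k-1} = \Delta_k \, 2 \, \pi \, g(H_V \, u)$, which conveniently disposes of the non-linearity of $\Phi$ within the norm estimate.

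The main obstacle is the second block: I must show $\norm{\expp^{\Delta_k \, M} \, \overline{v}}_2 \leq \norm{\overline{v}}_2$, i.e. that the IF transition matrix is non-expansive in the Euclidean norm, which reduces to $\norm{\expp^{\Delta_k \, M}}_2 \leq 1$. I would obtain this from the fact that the Mat\'{e}rn generator $M$ in~\eqref{equ:sde-matern32} drives a stable, mean-reverting, energy-dissipating flow, so the propagated IF state should not gain Euclidean norm over a forward step $\Delta_k \geq 0$. Combining the two block estimates then yields $\norm{\Phi(u)}_2^2 \leq \norm{x}_2^2 + \norm{\overline{v}}_2^2 = \norm{u}_2^2$, closing the argument. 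I expect the delicate point to be justifying this non-expansiveness rigorously, because Hurwitz stability of $M$ alone controls only the long-time decay and not the transient behaviour of $\expp^{\Delta_k \, M}$; the clean route is to verify the stronger dissipativity condition $M + M^\trans \preceq 0$ (equivalently, a non-positive logarithmic norm) directly for the generator at hand, from which $\tfrac{\diff}{\diff \Delta_k}\norm{\expp^{\Delta_k \, M}\,\overline{v}}_2^2 = \overline{v}^\trans \expp^{\Delta_k \, M^\trans}(M + M^\trans)\expp^{\Delta_k \, M}\,\overline{v} \leq 0$ gives the required monotone decay of the IF-block norm.
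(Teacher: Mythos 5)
Your overall strategy is exactly the paper's: the paper's entire proof consists of asserting the pointwise bound $\norm{\Phi(u)}_2 \leq \norm{u}_2$ and (implicitly) combining it with the Gaussian second-moment identity $\int \norm{u}_2^2 \, \mathrm{N}(u \cond \mu, \Theta) \diff u = \norm{\mu}_2^2 + \trace{\Theta}$, which is your first step. Your treatment of the harmonic block is also correct: the state-dependent rotation is an isometry for every angle, and $\expp^{-2\,\Delta_k\,\lambda} \leq 1$. The gap is at the step you yourself flag as delicate, and it is fatal as written: the dissipativity condition $M + M^\trans \preceq 0$ is \emph{false} for the Mat\'{e}rn $3\,/\,2$ generator in Equation~\eqref{equ:sde-matern32}, except in the isolated case $\ell = \sqrt{3}$. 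Writing $\gamma = \sqrt{3}\,/\,\ell$, one has
\begin{equation*}
	M + M^\trans =
	\begin{bmatrix}
		0 & 1 - \gamma^2 \\
		1 - \gamma^2 & -4\,\gamma
	\end{bmatrix},
\end{equation*}
and a symmetric matrix with a zero diagonal entry can be negative semidefinite only if the corresponding off-diagonal entries vanish, which here forces $\gamma = 1$. Concretely, taking $\overline{v} = \begin{bmatrix}1 & t\end{bmatrix}^\trans$ gives $\overline{v}^\trans (M + M^\trans) \, \overline{v} = 2\,t\,(1-\gamma^2) - 4\,\gamma\,t^2$, which is strictly positive for small $t$ of the same sign as $1-\gamma^2$; by your own derivative formula evaluated at $\Delta_k = 0$, the norm $\norm{\expp^{\Delta_k M}\,\overline{v}}_2$ is then strictly increasing at $\Delta_k=0$, so $\norm{\expp^{\Delta_k M}}_2 > 1$ for all sufficiently small $\Delta_k > 0$. (The limit $\ell \to \infty$ makes this vivid: $\expp^{\Delta_k M}$ tends to the shear $\begin{bmatrix}1 & \Delta_k \\ 0 & 1\end{bmatrix}$, whose spectral norm exceeds $1$.) Hurwitz stability of $M$ controls only the asymptotic decay, exactly as you feared; the transient is expansive in the Euclidean norm, and the proof cannot close.

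It is worth noting that this gap is inherited from the paper rather than introduced by you: the paper states $\norm{\Phi(u)}_2 \leq \norm{u}_2$ as a fact with no justification, and by the computation above that fact fails in general for the IF block. Two repairs suggest themselves. First, work in a weighted norm: since $M$ is Hurwitz, there exists $P \succ 0$ with $M^\trans P + P\,M \preceq 0$, and under the block weight $\mathrm{blkdiag}(I_2, P)$ the map $\Phi$ is genuinely non-expansive; translating back to the Euclidean norm multiplies the right-hand side of the lemma by the condition number of $P$, a constant that must then be carried through the recursion in Equation~\eqref{equ:thm-unroll-recursion} of Theorem~\ref{thm:error}. Second, keep the Euclidean norm but define $c_M \coloneqq \sup_{k\geq 1}\norm{\expp^{\Delta_k M}}_2^2 < \infty$ and conclude the weaker bound $\max(1, c_M)\,\big(\norm{\mu}_2^2 + \trace{\Theta}\big)$, which again degrades, but does not destroy, the downstream error bound. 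Either way, the lemma as stated (with constant $1$) holds only for special parameter values, and your analysis is what exposes this.
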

\begin{proof}
	This follows from the fact that $\norm{\Phi(u)}_2 \leq \norm{u}_2$ for all $u\in\R^{d_u}$ in~\eqref{equ:disc-lcd}.
\end{proof}

\begin{theorem}
	\label{thm:error}
	Suppose that Assumptions~\ref{assump:regularity-prior-if} and~\ref{assump:trace-class} hold. Then for every $k\geq 1$, the error is such that
	\begin{equation}
		\begin{split}
			&\expecbig{\abs{ g^{-1}(f_k) - H_V \, m_k }^2} \\
			&\leq e_0 \prod_{j=1}^k 3 \, z(\Delta_j)
			+ \gamma \sum^k_{j=1}\prod^{j-1}_{i=1} 3 \, z(\Delta_{k-i+1}) \\
			&\quad+ \sum^k_{j=1}\zeta_{k-j+1}\prod^{j-1}_{i=1} 3 \, z(\Delta_{k-i+1}),
			\label{equ:err-bound}
		\end{split}
	\end{equation}
	where $e_0 \coloneqq \expecbig{\abs{ g^{-1}(f_0) - H_V \, m_0 }^2}$ stands for the initial error, and
	\begin{equation}
		\begin{split}
			\gamma &\coloneqq 3 \, c + \frac{6 \, c_{\overline{P}}}{(c_\Sigma + \Xi)^2},\\
			\zeta_k &\coloneqq (2 \, c_K)^k \, \frac{3 \, c_{\overline{P}} \, \big(\norm{m_0}_2^2 + \trace{P_0}\big)}{(c_\Sigma + \Xi)^2} \\
			&\quad+\frac{3 \, c_{\overline{P}}}{(c_\Sigma + \Xi)^2} \, \bigg(2 \, \frac{c_{\overline{P}} \, (1 + \Xi)}{(c_\Sigma + \Xi)^2} + c_P\bigg) \sum^{k-1}_{j=0} (2 \, c_K)^j, \\
			c_\Sigma &\coloneqq \inf_{j\geq 1} \Sigma(\Delta_j).
		\end{split}
	\end{equation}
	Note that we define $\prod^0_{i=1} \coloneqq 1$ and $\sum^0_{j=0}\coloneqq 0$.
\end{theorem}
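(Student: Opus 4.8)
The plan is to collapse the claim into a single affine scalar recursion for $a_k \coloneqq \expecbig{\abs{g^{-1}(f_k) - H_V\,m_k}^2}$ of the form $a_k \le 3\,z(\Delta_k)\,a_{k-1} + \gamma + \zeta_k$, and then unroll it from $a_0 = e_0$; the product-and-sum expression in Equation~\eqref{equ:err-bound} is exactly the solution of such a recursion. The starting point is the one-step splitting $g^{-1}(f_k) - H_V\,m_k = \bigl(g^{-1}(f_k) - H_V\,m^-_k\bigr) - H_V\,K_k\,(y_k - H\,m^-_k)$ into a prediction error and a correction error, which I would treat separately.

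For the prediction error I would exploit the block structure of the LCD map in Equation~\eqref{equ:disc-lcd}. Because its $\overline{V}$-block $\expp^{\Delta_k\,M}$ is linear and autonomous (it does not depend on the $X$-block), the $\overline{V}$-part of $m^-_k = \int \Phi(u)\,\mathrm{N}(u \cond m_{k-1}, P_{k-1})\diff u$ equals $\expp^{\Delta_k\,M}$ applied to the $\overline{V}$-part of $m_{k-1}$. Writing $x$ for that $\overline{V}$-part, we have $H_V\,m^-_k = \overline{H}_V\,\expp^{\Delta_k\,M}\,x$ and $H_V\,m_{k-1} = \overline{H}_V\,x$, so Assumption~\ref{assump:regularity-prior-if} applied at this $x$ gives $\abs{g^{-1}(f_k) - H_V\,m^-_k}^2 \le z(\Delta_k)\,\abs{g^{-1}(f_{k-1}) - H_V\,m_{k-1}}^2 + c$ pathwise, which in expectation supplies the $z(\Delta_k)$ factor and the additive $c$.

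For the correction error I would substitute $y_k = \sin(\cdot) + \xi_k$, so that the error equals $\bigl[(g^{-1}(f_k) - H_V\,m^-_k) + H_V\,K_k\,H\,m^-_k - H_V\,K_k\sin(\cdot)\bigr] - H_V\,K_k\,\xi_k$; since $\xi_k$ is zero-mean and independent of $m^-_k$ and $K_k$, its cross terms drop in expectation and the bracketed three squared terms give the factor $3$, hence $3\,z(\Delta_k)$ after inserting the prediction bound. To control the gain I would first show $S_k = H\,P^-_k\,H^\trans + \Xi \ge c_\Sigma + \Xi$, using $P^-_k \succeq \Omega_{k-1}$ and $H\,\Omega_{k-1}\,H^\trans = \Sigma(\Delta_k) \ge c_\Sigma$, which with Assumption~\ref{assump:trace-class} gives $\abs{H_V\,K_k}^2 \le c_{\overline{P}}/(c_\Sigma+\Xi)^2$. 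Using $\abs{\sin(\cdot)}\le 1$, $\expec{\abs{\xi_k}^2}=\Xi$, and $\abs{H\,m^-_k}^2 \le \norm{m^-_k}_2^2$ then produces the constant $\gamma$ together with a feedback term proportional to $\expecbig{\norm{m^-_k}_2^2}$. To close this feedback I would run an auxiliary recursion: Lemma~\ref{lemma:expectation-of-phi} with Jensen gives $\expecbig{\norm{m^-_k}_2^2}\le \expecbig{\norm{m_{k-1}}_2^2}+c_P$, while $m_k = (I - K_k\,H)\,m^-_k + K_k\,y_k$ together with the gain bound and $\expec{\abs{y_k}^2}\le 1+\Xi$ gives $\expecbig{\norm{m_k}_2^2}\le 2\,c_K\,\expecbig{\norm{m^-_k}_2^2} + 2\,c_{\overline{P}}\,(1+\Xi)/(c_\Sigma+\Xi)^2$, with $c_K$ a bound on $\norm{I - K_k\,H}_2^2$. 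Composing the two gives a linear recursion with multiplier $2\,c_K$ whose solution, scaled by $3\,c_{\overline{P}}/(c_\Sigma+\Xi)^2$, is exactly $\zeta_k$; substituting back closes the main recursion, which I then unroll.

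I expect the correction step to be the main obstacle, since $K_k$ and $m^-_k$ are random and coupled to the past measurements, so no pathwise bound is available. This is precisely what forces the two devices above: Assumption~\ref{assump:trace-class} to bound $K_k$ uniformly, and the auxiliary $\expecbig{\norm{m^-_k}_2^2}$ recursion---which rests on the non-expansiveness $\norm{\Phi(u)}_2 \le \norm{u}_2$ behind Lemma~\ref{lemma:expectation-of-phi}---to absorb the $\abs{H\,m^-_k}^2$ feedback. The two delicate points are establishing the lower bound $S_k \ge c_\Sigma + \Xi$ via $P^-_k \succeq \Omega_{k-1}$, and separating the measurement noise so that its cross terms genuinely vanish in expectation rather than contributing uncontrolled terms.
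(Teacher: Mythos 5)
Your proposal follows essentially the same route as the paper's own proof: the identical split into prediction and correction errors, Assumption~\ref{assump:regularity-prior-if} applied through the linear $\overline{V}$-block of the LCD map, the gain bound $\norm{K_k}_2^2 \leq c_{\overline{P}}/(c_\Sigma+\Xi)^2$ from Assumption~\ref{assump:trace-class}, the auxiliary recursion for $\expecbig{\norm{m^-_k}_2^2}$ via Lemma~\ref{lemma:expectation-of-phi}, and the final unrolling. Your two refinements---justifying $S_k \geq c_\Sigma + \Xi$ via $P^-_k \succeq \Omega_{k-1}$, and dropping the $\xi_k$ cross terms by independence rather than absorbing the noise into the three-term triangle inequality---are details the paper leaves implicit or handles more loosely, and both are sound.
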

\begin{proof}
	Thanks to Assumption~\ref{assump:trace-class}, the Kalman gain is such that
	\begin{equation*}
		\begin{split}
			\norm{K_k}_2^2 &= \norm{P^-_k \, H^\trans \, / \, S_k}_2^2 \leq  \frac{c_{\overline{P}}}{(c_\Sigma + \Xi)^2},
		\end{split}
	\end{equation*}
	and then that $\norm{I - K_k \, H}_2^2 \leq c_K$ for some constant $c_K$ (determined by $c_{\overline{P}}$, $c_\Sigma$, and $\Xi$), almost surely.

	By applying the triangle inequality on the mean squared error, we arrive at a bound composed of three residuals:
	\begin{align}
		 & \expecbig{\absbig{g^{-1}(f_k) - H_V \, m_k}^2} \nonumber                                                                              \\
		 & = \expecbig{\absbig{g^{-1}(f_k) - H_V \, m^-_k + H_V \, K_k \, (Y_k - H \, m^-_k)}^2} \label{equ:err-bound-sec-residual}              \\
		 & \leq 3 \expecbig{\absbig{g^{-1}(f_k) - H_V \, m^-_k}^2} \nonumber                                                                     \\
		 & \quad+ 3 \expecbigg{\absbigg{H_V \, K_k \,\bigg(\sin\bigg(\int^{t_k}_0 2 \, \pi \, f(s)\diff s\bigg) - H \, m^-_k\bigg)}^2} \nonumber \\
		 & \quad+ 3 \expecbig{\abs{H_V \, K_k \,\xi_k}^2}. \nonumber
	\end{align}
	The last residual satisfies $\expecbig{\abs{H_V \, K_k \,\xi_k}^2} \leq c_{\overline{P}} \, / \, (c_\Sigma + \Xi)^2$. We can establish a bound to the first residual by applying Assumption~\ref{assump:regularity-prior-if}, resulting in
	\begin{equation*}
		\begin{split}
			&\expecbig{\absbig{g^{-1}(f_k) - H_V \, m^-_k}^2} \\
			&\leq z(\Delta_k)  \expecbig{\absbig{g^{-1}(f_{k-1}) - H_V \, m_{k-1}}^2} +c.
		\end{split}
	\end{equation*}
	As for the second residual, we have
	\begin{equation}
		\begin{split}
			&\expecbigg{\absbigg{H_V \, K_k \,\bigg(\sin\bigg(\int^{t_k}_0 2 \, \pi \, f(s)\diff s\bigg) - H \, m^-_k\bigg)}^2} \\
			&\leq \frac{c_{\overline{P}}}{(c_\Sigma + \Xi)^2}\expecbigg{\absbigg{\sin\bigg(\int^{t_k}_0 2 \, \pi \, f(s)\diff s\bigg) - H \, m^-_k}^2} \\
			&\leq  \frac{c_{\overline{P}} \, \big(1 + \expecbig{\norm{m^-_k}_2^2}\big)}{(c_\Sigma + \Xi)^2}.
			\label{equ:theorem-second-res}
		\end{split}
	\end{equation}
	Lemma~\ref{lemma:expectation-of-phi} and Assumption~\ref{assump:trace-class} imply that
	\begin{equation}
		\begin{split}
			\expecbig{\norm{m^-_k}_2^2} &\leq \expecbig{\norm{m_{k-1}}_2^2 + \trace{P_{k-1}}} \\
			&\leq 2 \expecbig{\norm{(I - K_{k-1} \, H) \, m^-_{k-1}}_2^2} \\
			&\quad+ 2 \expecbig{\norm{K_{k-1} \, Y_{k-1}}_2^2} + c_P \\
			&\leq 2 \, c_K \expecbig{\norm{m^-_{k-1}}_2^2} + 2 \, \frac{c_{\overline{P}} \, (1 + \Xi) }{(c_\Sigma + \Xi)^2} + c_P.
		\end{split}
	\end{equation}
	By unrolling the recursion of $\expecbig{\norm{m^-_k}_2^2}$ from the equation above for $k\geq 1$, we obtain
	\begin{equation}
		\begin{split}
			\expecbig{\norm{m^-_k}_2^2} &\leq (2 \, c_K)^k \, \big(\norm{m_0}_2^2 + \trace{P_0}\big) \\
			&\quad+ \bigg(2 \, \frac{c_{\overline{P}} \, (1 + \Xi) }{(c_\Sigma + \Xi)^2} + c_P\bigg) \sum^{k-1}_{j=0} (2 \, c_K)^j,
			\label{equ:theorem-m-recursion}
		\end{split}
	\end{equation}
	noting that the initial $\expecbig{\norm{m^-_1}_2^2} =  \norm{m^-_1}_2^2 \leq \norm{m_0}_2^2 + \trace{P_0}$. We can then substitute~\eqref{equ:theorem-m-recursion} back into~\eqref{equ:theorem-second-res}. Finally, by putting it all together, we have
	\begin{equation}
		\begin{split}
			&\expecbig{\abs{g^{-1}(f_k) - H_V \, m_k}^2} \\
			&\leq 3\,z(\Delta_k)  \expecbig{\abs{g^{-1}(f_{k-1}) - H_V \, m_{k-1}}^2} + 3\, c \\
			&\quad+ \frac{3 \, c_{\overline{P}}}{(c_\Sigma + \Xi)^2} + (2 \, c_K)^k \, \frac{3 \, c_{\overline{P}} \, \big(\norm{m_0}_2^2 + \trace{P_0}\big)}{(c_\Sigma + \Xi)^2} \\
			&\quad+\frac{3 \, c_{\overline{P}}}{(c_\Sigma + \Xi)^2} \, \Big(2 \, \frac{c_{\overline{P}} \, (1 + \Xi)}{(c_\Sigma + \Xi)^2} + c_P\Big) \sum^{k-1}_{j=0} (2 \, c_K)^j \\
			&\quad+ 3 \, \frac{c_{\overline{P}}}{(c_\Sigma + \Xi)^2}
		\end{split}
		\label{equ:thm-unroll-recursion}
	\end{equation}
	which is a recursion of the error. Unrolling the recursion for $k\geq 1$ concludes the result.
\end{proof}

Theorem~\ref{thm:error} provides an upper bound of the estimation error in the mean square sense. This bound is dominated by the function $z$ and the constant $c_K$, in the way that the bound is contractive as long as $z$ and $c_K$ are not too large. This result makes sense because it reflects that the IF prior should be chosen well enough to model the true IF (see Assumption~\ref{assump:regularity-prior-if}), and that the Kalman gain should be finite too.

In the following corollary, we show that the error bound from Theorem~\ref{thm:error} can be simplified to a contractive one when $z$ and $c_K$ meet a certain criterion.

\begin{corollary}
	\label{corollary:simple-bound}
	Following Theorem~\ref{thm:error}, suppose that $c_K < 1 \, / \, 2$, and that $z(\Delta_j) \leq c_z < 1 \, / \, 3$ for all $j=1,2,\ldots,k$, then the error bound in~\eqref{equ:err-bound} reduces to
	\begin{equation}
		\expecbig{\abs{ g^{-1}(f_k) - H_V \, m_k }^2} \leq (3 \, c_z)^k \, e_0 + \frac{\gamma + \overline{\zeta}}{1 - 3 \, c_z} ,
	\end{equation}
	where
	\begin{equation*}
		\overline{\zeta} \coloneqq \frac{3 \, c_{\overline{P}}}{(c_\Sigma + \Xi)^2}\bigg( \norm{m_0}_2^2 + \trace{P_0} +  \frac{2 \, \frac{c_{\overline{P}} \, (1 + \Xi)}{(c_\Sigma + \Xi)^2} + c_P}{1 - 2 \, c_K} \bigg).
	\end{equation*}
\end{corollary}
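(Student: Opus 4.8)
The plan is to take the three-term bound delivered by Theorem~\ref{thm:error} and collapse each piece using the two uniform hypotheses, namely $3\,z(\Delta_j) \leq 3\,c_z < 1$ and $2\,c_K < 1$. First I would dispatch the leading term: since every factor satisfies $3\,z(\Delta_j) \leq 3\,c_z$, the product obeys $\prod_{j=1}^k 3\,z(\Delta_j) \leq (3\,c_z)^k$, which immediately yields the $(3\,c_z)^k\,e_0$ summand in the claimed inequality.

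For the second term, each inner product $\prod_{i=1}^{j-1} 3\,z(\Delta_{k-i+1})$ is likewise majorised by $(3\,c_z)^{j-1}$ regardless of the index shift $k-i+1$, precisely because the hypothesis $z(\Delta_j)\leq c_z$ is uniform in $j$. Bounding the finite geometric sum by the corresponding infinite one then gives $\gamma\sum_{j=1}^k (3\,c_z)^{j-1} \leq \gamma/(1-3\,c_z)$, where the condition $3\,c_z<1$ guarantees convergence.

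The crux of the argument is the third term, and here I would first establish the $k$-independent bound $\zeta_{k}\leq\overline{\zeta}$. Inspecting the definition of $\zeta_k$, the power factor satisfies $(2\,c_K)^k \leq 1$ because $2\,c_K<1$, and the truncated geometric sum obeys $\sum_{j=0}^{k-1}(2\,c_K)^j \leq 1/(1-2\,c_K)$; substituting these two estimates directly into the definition of $\zeta_k$ reproduces exactly the expression $\overline{\zeta}$. This is the one place where the hypothesis $c_K<1/2$ is used in an essential way, since it is what renders the supremum of $\zeta_k$ finite and independent of $k$. With $\zeta_{k-j+1}\leq\overline{\zeta}$ established, the third term is dominated by $\overline{\zeta}\sum_{j=1}^k (3\,c_z)^{j-1} \leq \overline{\zeta}/(1-3\,c_z)$, by the same geometric-series argument used for the second term.

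Finally I would add the three majorised contributions and merge the two $1/(1-3\,c_z)$ pieces into the single factor $(\gamma+\overline{\zeta})/(1-3\,c_z)$, recovering the stated bound. I expect the only genuinely delicate step to be the uniform estimate $\zeta_k\leq\overline{\zeta}$ — in particular, verifying that $(2\,c_K)^k\leq 1$ for every $k\geq 1$ and that the truncated geometric sum can be safely replaced by its limit — whereas the remaining manipulations are routine termwise majorisation of nonnegative geometric series.
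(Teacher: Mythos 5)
Your proposal is correct and takes essentially the same route as the paper: the paper's own proof is a one-line application of the geometric-series identity $\sum_{j} a^j < 1/(1-a)$ for $0<a<1$ together with the hypotheses $3\,c_z<1$ and $2\,c_K<1$, and your write-up simply makes explicit the termwise majorisations (the product bound $(3\,c_z)^{j-1}$, the uniform estimate $\zeta_m \leq \overline{\zeta}$, and the final merging of the two geometric sums) that the paper leaves implicit.
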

\begin{proof}
	Recall the identity that $\sum^k_{j} a^j < \frac{1}{1 - a}$ for every number $0<a<1$ and indexes $k=0,1,\ldots$ By applying the assumptions and this identity to~\eqref{equ:err-bound} we conclude the result.
\end{proof}

It is worth noting that the error bound in the main theorem does not reflect how well the chirp prior copes with the chirp signal. Indeed, the bound shows that the prior for the IF should be well-chosen through $z$, but it does not explicitly reveal how well the harmonic SDE~\eqref{equ:harmonic-sde} models the chirp signal in~\eqref{equ:err-bound-attack-measurement}. This is due to~\eqref{equ:err-bound-sec-residual} where we used a worst-case triangle bound on the residual between the chirp prediction (i.e., $H \, m^-_k$) and the true chirp signal (i.e., $\sin(\int^{t_k}_0 2 \, \pi \, f(s) \diff s)$): they are both bounded. Eventually, this residual error is not directly reflected in the final error bound but it is instead implicitly contained in that of the constant $c_K$. We believe that the error bound can be improved given a tighter bound on the chirp residual. This is a worthwhile future work. 

\subsection{Cram\'{e}r--Rao lower bound}
\label{sec:crlb}
In this section, we compute a Cram\'{e}r--Rao lower bound (CRLB) for the mean-square error. However, recall that the true state $U(t_k)$ is not deterministic but a random variable, and thus the CRLB in the classical definition does not apply. The stochastic counterpart of the classical CRLB is given in~\cite{Tichavsky1998}, and is known as the posterior CRLB. If $m_k$ is the filtering estimate that depends on the measurements $Y_{1:k}$, then the posterior CRLB is 
\begin{equation*}
	\expecbig{(m_k - U(t_k)) \, (m_k - U(t_k))^\trans} \geq \mathcal{I}_k^{-1}, 
\end{equation*}
where $\mathcal{I}_k^{-1}$ is the $k$-th sub-matrix of the inverse of the Fisher information matrix $\mathcal{I}_{1:k}$ defined by 
\begin{equation}
	\mathcal{I}_{1:k} \coloneqq -\expecbig{\nabla\nabla\vphantom{\nabla}_{u_{1:k}}^\trans \!\log p(y_{1:k}, u_{1:k})},
	\label{equ:crlb-I}
\end{equation}
where $p(y_{1:k}, u_{1:k}) \coloneqq p_{Y_{1:T}, U_{1:T}}(y_{1:k}, u_{1:k})$ is the joint probability density function of $Y_{1:T}$ and $U_{1:T}$, and $\nabla\nabla_{u_{1:k}}^\trans$ denotes the Hessian matrix with respect to the variable $u_{1:k}$. Moreover, thanks to the recursion relation in~\cite{Tichavsky1998}, we do not need to explicitly compute the full-rank Hessian matrix $\mathcal{I}_{1:k}$; we can recursively compute $\mathcal{I}_k$ for $k=1,2,\ldots$ from the initial $\mathcal{I}_0 \coloneqq -\expec{\nabla\nabla\vphantom{\nabla}_{u_0}^\trans \log p(u_0)}$ with a low-cost computation. The CRLB is not analytically tractable due to the expectations in~\eqref{equ:crlb-I}, and hence, we numerically compute these expectations by 1,000,000 independent Monte Carlo simulations of the model in~\eqref{equ:sde-chirp}.

\begin{figure}
	\centering
	\includegraphics[width=.99\linewidth]{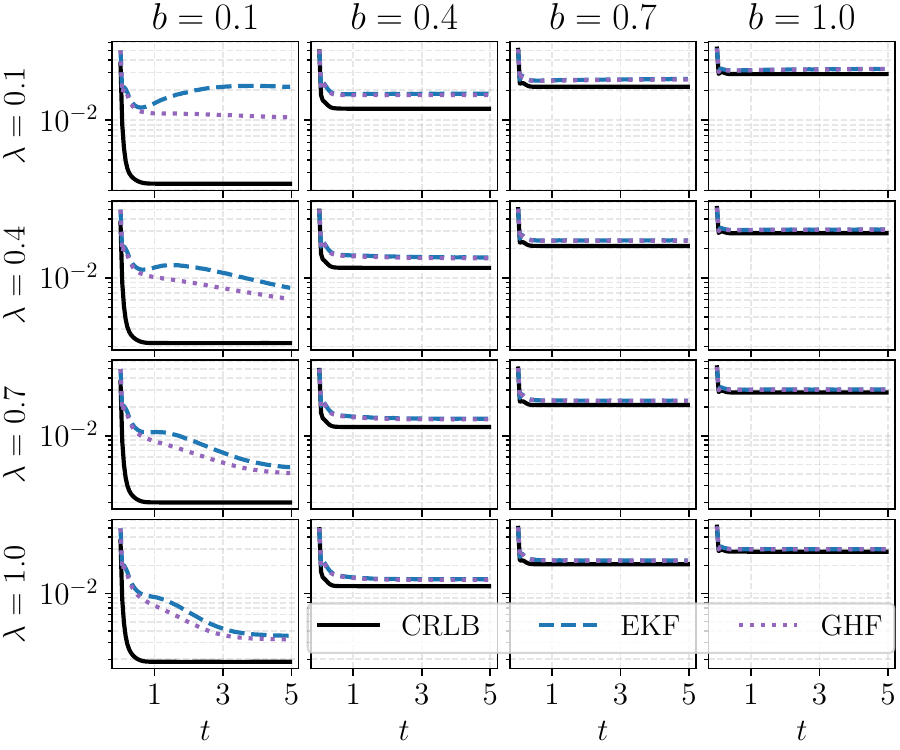}
	\includegraphics[width=.99\linewidth]{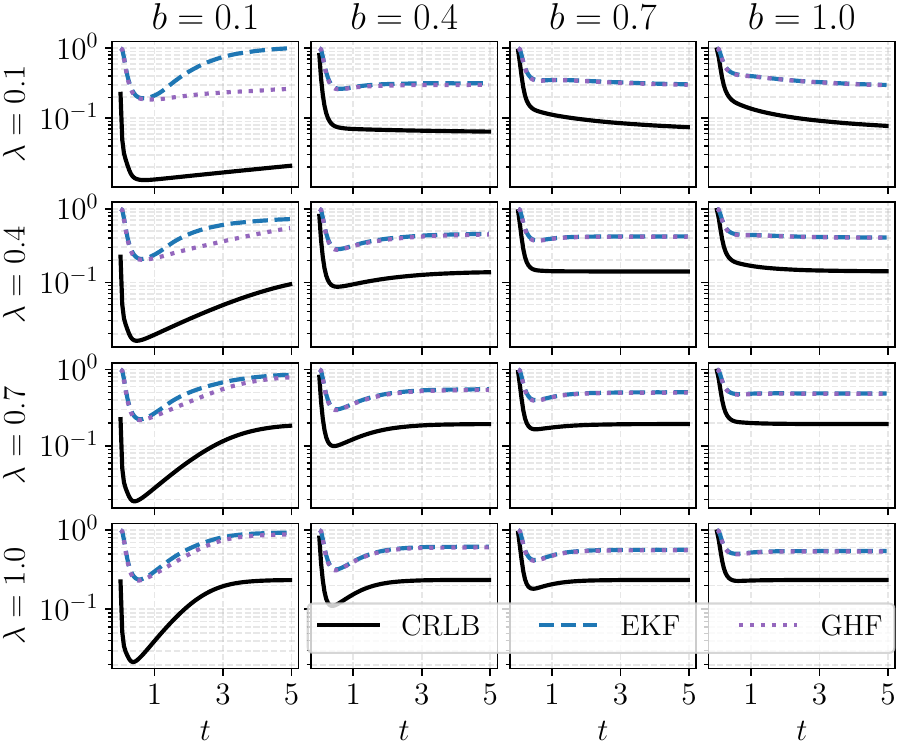}
	\caption{The CRLBs of the GP-SDE model in~\eqref{equ:sde-chirp} with different parameter settings. The top and bottom blocks show the CRLBs of the chirp (i.e., $X(t)$) and IF (i.e., $V(t)$) components in the state, respectively.}
	\label{fig:crlbs}
\end{figure}

The CRLB of our model in~\eqref{equ:sde-chirp} is plotted in Figure~\ref{fig:crlbs}. The first important result we see from the figure is that the CRLB converges as $t\to \infty$ which is expected~\cite[see,][pp. 1389]{Tichavsky1998}. In addition, the figure shows CRLBs with different values of $\lambda$ and $b$, since these two parameters are key in characterising the chirp state, while other parameters are fixed by $\ell=1$, $\sigma=1$, $\delta = 0.1$, and $\Xi=0.1$. We detail the results in the following.

We see from the top block of Figure~\ref{fig:crlbs} that the CRLB for the chirp state decreases as the parameter $b$ decreases. This is expected, because $b$ controls the stochastic volatility of the chirp state, and hence, the signal-to-noise-ratio. But on the other hand, this does not mean that $b$ should always be very small or even zero in applications, because real-world chirp signals have random amplitudes which demand a non-zero $b$. The parameter $\lambda$, however, does not significantly affect the CRLB of the chirp state.

The bottom block of Figure~\ref{fig:crlbs} shows the CRLB for the IF state. We see that as $\lambda$ increases, in particular when $b$ is small, the CRLB decreases. This is intuitive because $\lambda$ controls the damping factor of the chirp state. The mean of the chirp state converges to zero at the speed of $\exp(-\lambda \, t)$, and hence, when $t$ is large, the mean almost amounts to zero, which makes it hard to identify the underlying IF. As for the parameter $b$, it does not substantially influence the CRLB of the IF state. 

We also gauge two reference methods relative to the CRLBs. These methods are the extended Kalman filter (EKF) and the Gauss--Hermite filter (GHF) which are commonly used instances of Algorithm~\ref{alg:gaussian-filter-smoother}. In the top block of Figure~\ref{fig:crlbs} we see that the two estimators are close to the CRLB for estimating the chirp state as the time increases, while GHF is slightly better than EKF. On the other hand, in the bottom block of the same figure, we see that the IF estimates have a noticeable distance to the CRLB regardless of the parameter values. This is due to two reasons: the EKF and GHF methods are approximate estimators (even biased), and the bound in~\eqref{equ:crlb-I} is not tight. There are a number of posterior CRLBs tighter~\cite[see, e.g.,][]{Fritsche2014} than the used $\mathcal{I}_k^{-1}$, but unfortunately, these bounds are either computationally expensive or require exact knowledge of the filtering distribution.

\begin{table*}[]
	\centering
	\caption{Means, medians, and minimums (from 100 MC runs) of the IF estimation RMSEs for $J=1$. The methods below the dashed line use the proposed GP-SDE model. Bold numbers represent the best in each of their columns.}
	\label{tbl:rmse-reuslts}
	\begin{tabular}{@{}rrrrrrrrrr@{}}
		\toprule
		\multirow{2}{*}{Method / RMSE ($\times 10^{-1}$)} & \multicolumn{3}{r}{$\alpha(t)=1$} & \multicolumn{3}{r}{$\alpha(t)=\expp^{-0.3 \,t}$} & \multicolumn{3}{r}{$t\mapsto\alpha(t)$ is a random process} \\ \cmidrule(l){2-4} \cmidrule(l){5-7} \cmidrule(l){8-10} 
		& mean $\pm$ std.             & median  & min.  & mean                   & median      & min.       & mean                    & median       & min.        \\ \midrule
		Hilbert transform       & $7.13 \pm 2.35$  & $6.37$    & $5.64$ & $11.74 \pm 11.06$      & $9.02$        & $7.06$      & $54.63 \pm 25.58$       & $51.22$        & $6.04$       \\
		Spectrogram             & $1.53 \pm 0.08$  & $1.53$    & $1.30$ & $1.82 \pm 0.18$        & $1.83$        & $1.41$      & $8.17 \pm 4.31$         & $6.84$         & $1.78$       \\
		Polynomial MLE          & $8.87 \pm 0.09$  & $8.87$    & $8.63$ & $8.90 \pm 0.13$        & $8.88$        & $8.58$      & $10.01 \pm 4.33$        & $9.16$         & $4.78$       \\
		ANF~\cite{Niedzwiecki2011}                     & $2.13 \pm 0.16$  & $2.11$    & $1.80$ & $3.05 \pm 0.31$        & $3.04$        & $2.44$      & $37.77 \pm 23.57$       & $33.52$        & $1.83$       \\
		EKFS MLE on~\eqref{equ:old-ekf-ss}~\cite{Scala1996}         & $1.09 \pm 0.20$  & $1.08$    & $0.69$ & $19.53 \pm 18.14$      & $3.40$        & $2.12$      & $39.85 \pm 17.61$       & $40.19$        & $1.25$       \\
		GHFS MLE on~\eqref{equ:old-ekf-ss}~\cite{Scala1996}         & $0.67 \pm 0.17$  & $0.62$    & $0.39$ & $3.48 \pm 7.15$        & $1.92$        & $1.01$      & $38.15 \pm 19.51$       & $40.06$        & $1.48$       \\
		FastNLS~\cite{Nielsen2017}                 & $1.08 \pm \mathbf{0.05}$  & $1.08$    & $0.94$ & $1.27 \pm \mathbf{0.11}$        & $1.26$        & $1.04$      & $10.90 \pm 6.15$        & $10.39$        & $1.04$       \\
		FHC~\cite{Jensen2017}                     & $0.91 \pm 0.06$  & $0.90$    & $0.77$ & $1.16 \pm 0.11$        & $1.15$        & $0.94$      & $12.96 \pm 6.70$        & $12.16$        & $1.00$       \\
		KPT MLE~\cite{Shi2017}                 & $1.49 \pm 0.17$  & $1.47$    & $1.10$ & $1.83 \pm 0.24$        & $1.79$        & $1.40$      & $21.63 \pm 21.40$       & $12.35$        & $1.17$       \\\cdashlinelr{1-10}
		EKFS MLE                & $0.70 \pm 0.17$  & $0.69$    & $\mathbf{0.37}$ & $0.98 \pm 0.24$        & $0.97$        & $\textbf{0.51}$      & $6.38 \pm 7.04$         & $4.16$         & $0.55$       \\
		GHFS MLE                & $\mathbf{0.65} \pm 0.16$  & $\mathbf{0.61}$    & $0.38$ & $\mathbf{0.93} \pm 0.24$        & $\mathbf{0.92}$        & $0.53$      & $5.11 \pm 5.12$         & $\mathbf{3.61}$         & $0.65$       \\
		CD-EKFS MLE             & $1.53 \pm 0.67$  & $1.55$    & $0.37$ & $2.86 \pm 2.26$        & $1.37$        & $0.54$      & $6.34 \pm 7.18$         & $4.09$         & $\mathbf{0.53}$       \\
		CD-GHFS MLE             & $0.72 \pm 0.18$  & $0.68$    & $0.38$ & $1.16 \pm 0.34$        & $1.14$        & $0.56$      & $\mathbf{4.66} \pm \mathbf{3.48}$         & $3.73$         & $0.65$       \\ \bottomrule
	\end{tabular}
\end{table*}

\begin{table}[t!]
	\centering
	\caption{Estimated model parameters from the GHFS MLE method shown in Figure~\ref{fig:estimation}. The parameter $\lambda$ in the second column is given in the bold face since its estimate is very close to the true damping factor $0.3$.}
	\label{tbl:learnt-params}
	\begin{tabular}{@{}llll@{}}
		\toprule Parameters
		 & $\alpha(t)=1$        & $\alpha(t)=\expp^{-0.3 \,t}$  & \begin{tabular}[c]{@{}l@{}}$t\mapsto\alpha(t)$ is a \\  random process\end{tabular} \\ \midrule
		$\lambda$ & $2.06\times 10^{-2}$ & $\mathbf{3.00\times 10^{-1}}$ & $1.05$                     \\
		$b$       & $8.07\times 10^{-5}$ & $5.77\times 10^{-3}$          & $1.05$                     \\
		$\delta$  & $4.51\times10^{-1}$  & $4.56\times10^{-1}$           & $0.99\times10^{-1}$        \\
		$\ell$    & $1.20$               & $1.14$                        & $1.33$                     \\
		$\sigma$  & $4.88$               & $4.86$                        & $5.98$                     \\
		$m_0^V$   & $10.37$              & $10.26$                       & $12.97$                    \\ \bottomrule
	\end{tabular}
\end{table}

\begin{figure*}[t!]
	\centering
	\includegraphics[width=.99\linewidth]{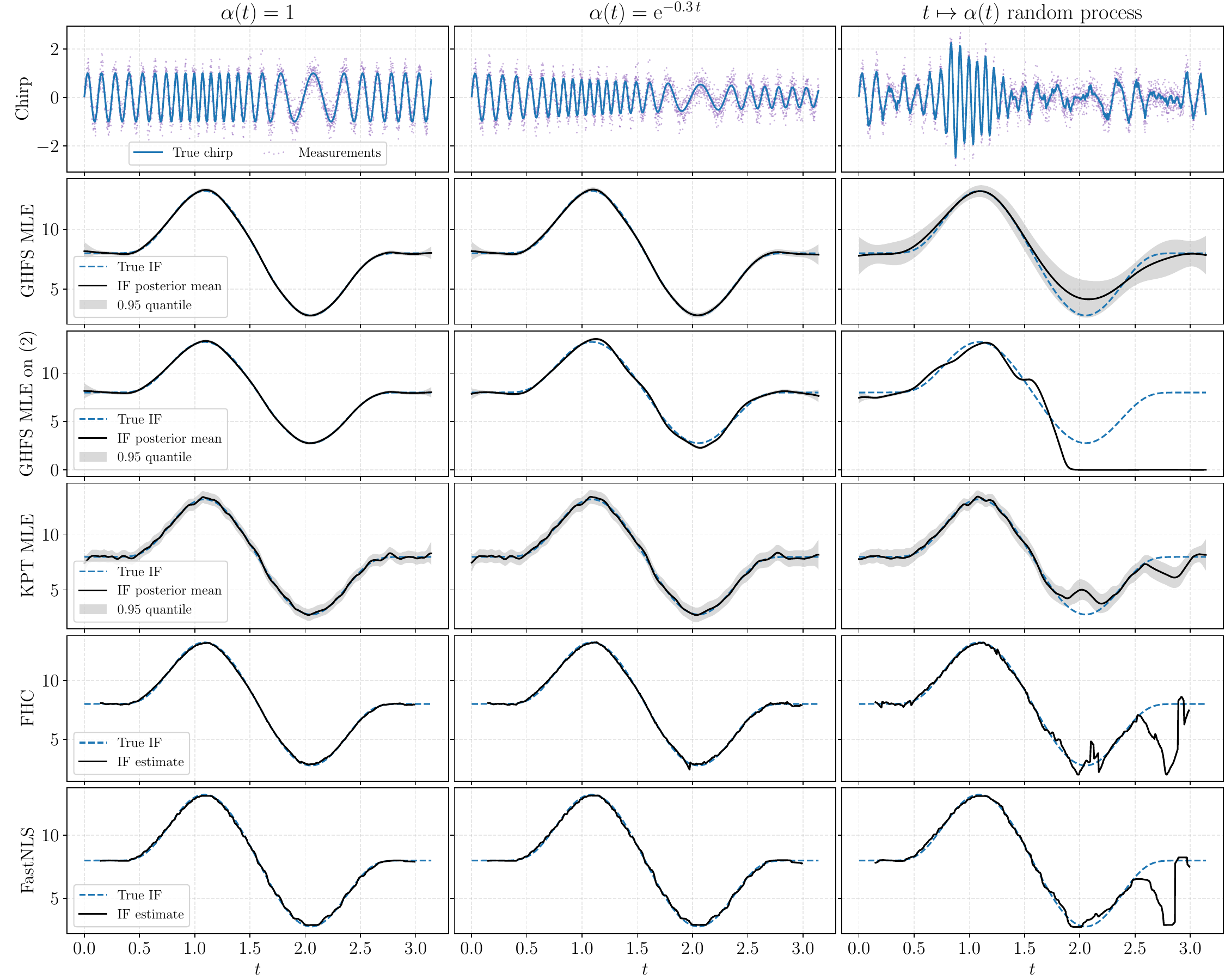}
	\caption{IF estimates ($J=1$) from one MC run. For simplicity, we only plot the estimates from the methods that stand out in Table~\ref{tbl:rmse-reuslts}. }
	\label{fig:estimation}
\end{figure*}

\begin{table*}[]
	\centering
	\caption{Means, medians, and minimums (from 100 MC runs) of the harmonic IF estimation RMSEs for $J=3$. The methods below the dashed line use the proposed GP-SDE model. Bold numbers represent the best in each of their columns.}
	\label{tbl:results-rmse-harmonic}
	\begin{tabular}{@{}rrrrrrrrrr@{}}
		\toprule
		\multirow{2}{*}{Method / RMSE ($\times 10^{-1}$)} & \multicolumn{3}{r}{$\alpha(t)=1$} & \multicolumn{3}{r}{$\alpha(t)=\expp^{-0.3 \,t}$} & \multicolumn{3}{r}{$t\mapsto\alpha(t)$ is a random process} \\ \cmidrule(l){2-4} \cmidrule(l){5-7} \cmidrule(l){8-10} 
		& mean $\pm$ std.             & median  & min.  & mean                   & median      & min.       & mean                    & median       & min.        \\ \midrule
		FastNLS~\cite{Nielsen2017}                 & $2.35 \pm \mathbf{0.05}$  & $2.35$    & $2.23$ & $2.40 \pm \mathbf{0.10}$       & $2.39$         & $2.20$      & $\mathbf{8.23} \pm 4.44$         & $7.27$         & $2.29$       \\
		FHC~\cite{Jensen2017}                     & $0.81 \pm 0.16$  & $0.76$    & $0.71$ & $1.59 \pm 0.42$       & $1.58$         & $0.74$      & $12.58 \pm 5.49$        & $12.76$        & $0.77$       \\
		KPT MLE~\cite{Shi2017}                 & $1.63 \pm 0.44$  & $1.54$    & $0.94$ & $1.63 \pm 1.77$       & $1.45$         & $1.05$      & $42.21 \pm 22.62$       & $37.32$        & $1.23$       \\\cdashlinelr{1-10}
		EKFS MLE                & $\mathbf{0.40} \pm 0.09$  & $\mathbf{0.39}$    & $\mathbf{0.24}$ & $\mathbf{1.00} \pm 1.93$       & $0.57$         & $0.53$      & $19.83 \pm 21.53$       & $10.31$        & $\mathbf{0.43}$       \\
		CKFS MLE                & $0.45 \pm 0.14$  & $0.42$    & $0.24$ & $1.52 \pm 3.07$       & $\mathbf{0.56}$         & $\mathbf{0.28}$      & $9.95 \pm 5.12$         & $\mathbf{3.40}$         & $0.47$          \\ \bottomrule
	\end{tabular}
\end{table*}

\begin{figure}[t!]
	\centering
	\includegraphics[width=.99\linewidth]{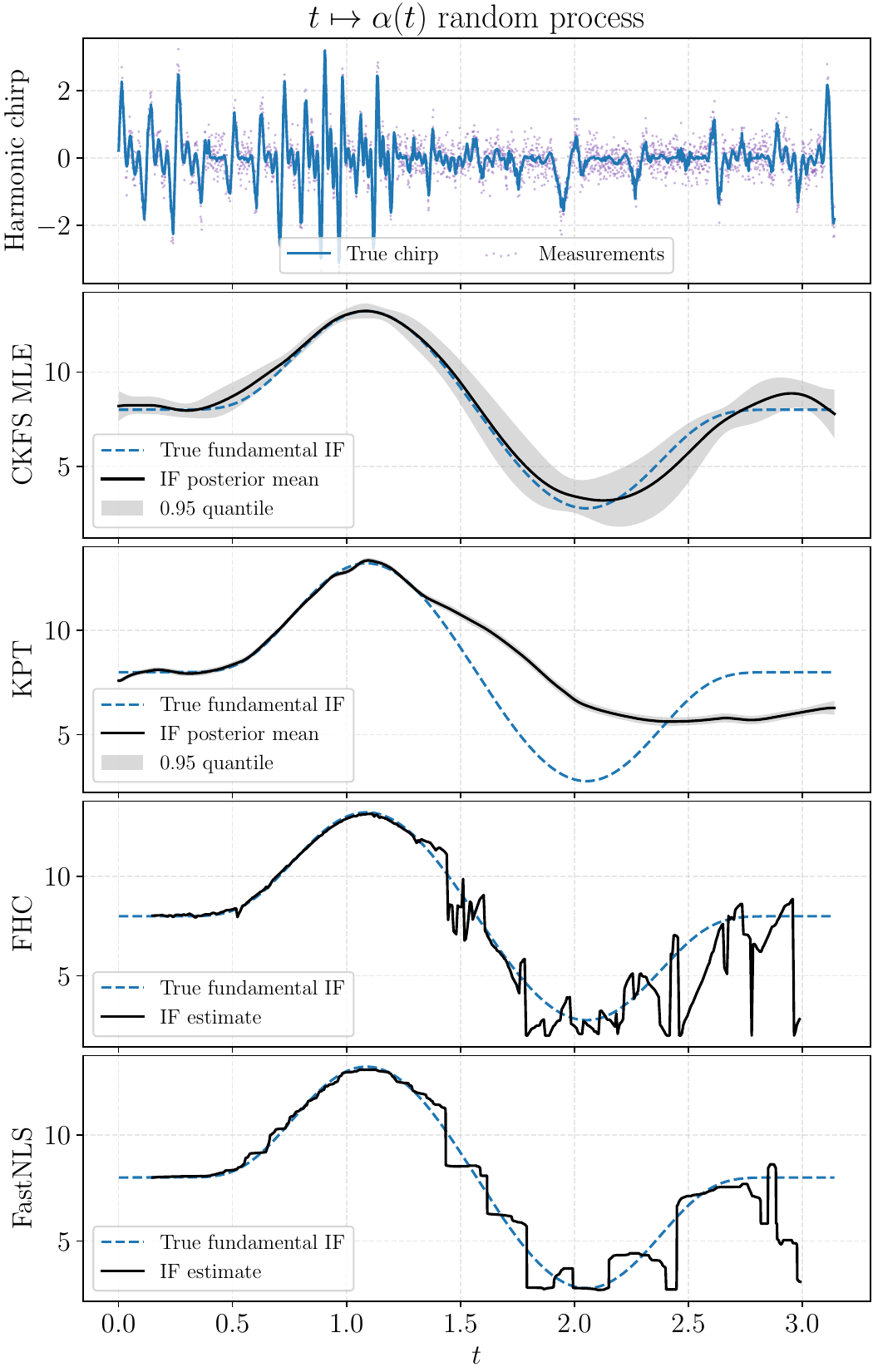}
	\caption{Fundamental IF estimates ($J=3$). These results are taken from one MC run in Table~\ref{tbl:results-rmse-harmonic}. }
	\label{fig:estimation-harmonic}
\end{figure}

\section{Experiments}
\label{sec:experiments}
In this section, we test the performance of the proposed chirp estimation scheme on a synthetic data and compare to several state-of-the-art methods. Furthermore, we apply the method on a gravitational wave data and two European bat calls, to show that our estimation method works on real-world data out-of-the-box. For the sake of reproducibility, our implementations are online available at: \url{https://github.com/spdes/chirpgp}.

\subsection{Synthetic experiment}
\label{sec:experiment-toy}
The synthetic model that we use to test the performance of the proposed scheme is given in the following equation:
\begin{equation}
	\begin{split}
		f(t) &= a \, b \cot(t) \csc(t) \, \expp^{-b \csc(t)} + c, \quad t \in (0, \pi), \\
		Y_k &= \sum^J_{j=1} \alpha(t_k) \sin\biggl(2 \, \pi \, j \int^{t_k}_0 f(s) \diff s\biggr) + \xi_k,
		\label{equ:toymodel}
	\end{split}
\end{equation}
where $f$ is the true IF parametrised by $a=500$, $b=5$, and $c=8$ (which control the function's amplitude, horizontal scale, and offset, respectively), and $J$ is the number of harmonic frequencies. The integral of $f$ (i.e., the phase) is analytically available and it is $a \exp(-b \, / \sin(t)) + c \, t$. Measurements of this chirp signal at any time $t_k$ is represented by a random variable $Y_k$ along with a Gaussian noise $\xi_k \sim \mathrm{N}(0, 0.1)$.

We consider three instantaneous amplitude functions, namely, constant $\alpha(t)=1$, damped $\alpha(t)=\exp(-0.3 \, t)$, and a random $t \mapsto \alpha(t)$. The random $t \mapsto \alpha(t)$ is a path generated from an Ornstein--Uhlenbeck SDE $\diff \alpha(t) = - \alpha(t) \diff t + \diff W(t)$. The aim of using these different amplitude functions is to test if the methods can identify the IF under the nuisance from the chirp amplitude, in particular, the random $t \mapsto \alpha(t)$.

We generate the chirp signal and its measurements using a sampling frequency of $1,000$~Hz (i.e., 3,141 data points in total). The estimation performance is quantified by the root mean squared error (RMSE). Furthermore, we run all the IF estimation methods using 100 independent Monte Carlo (MC) simulations in order to get the statistics of their RMSE results. 

We apply the extended Kalman filter and smoother (EKFS), and the 3rd-order Gauss--Hermite Gaussian filter and smoother (GHFS) -- which are the most popular instances of Algorithm~\ref{alg:gaussian-filter-smoother} -- to solve the proposed model in~\eqref{equ:sde-chirp}. The discretisation of this model is performed using the LCD scheme in~\eqref{equ:disc-lcd}, and the bijection is $g(\cdot)=\log(\exp(\cdot)+1)$. In addition, since our model can be solved in continuous-time, we also apply the continuous-discrete EKFS and GHFS (CD-EKFS and CD-GHFS, respectively)~\cite{SimoGFS2013} to solve it directly without discretising the SDE in~\eqref{equ:sde-chirp}.

The model proposed in~\eqref{equ:sde-chirp} consists of six parameters, that are, $\lambda$, $b$, $\delta$, $\ell$, $\sigma$, and the initial mean of $V$ denoted by $m^V_0$. They are determined via the maximum likelihood estimation (MLE) by minimising the objective function in~\eqref{equ:gfs-log-likelihood} using the L-BFGS optimiser.

We consider the following baseline methods to compare to ours: Hilbert transform; first-order power spectrum (spectrogram), adaptive notch filter (ANF)~\cite{Niedzwiecki2011}, 10-th order polynomial MLE, fast non-linear least square (FastNLS)~\cite{Nielsen2017}, fast MLE for harmonic chirp parameters (FHC)~\cite{Jensen2017}, and the state-space models by~\cite{Scala1996} and~\cite{Shi2017} (called EKFS/GHFS MLE on~\eqref{equ:old-ekf-ss} and KPT MLE, respectively). In particular, the state-space models are arguably the most interesting to compare with, as they are close to our model in~\eqref{equ:sde-chirp}. The parameters and settings of these baseline methods are detailed in Appendix~\ref{append:baseline}.

We first consider the single harmonic $J=1$ case, and we summarise the RMSE results in Table~\ref{tbl:rmse-reuslts}. From this table, it is clear that the proposed IF estimation scheme outperforms all the other methods (in terms of mean, median, and minimum). In particular, when the amplitude function $\alpha$ is a random process, the proposed methods are substantially better than the others. This verifies that the proposed model is a good candidate for modelling chirp signals with random amplitude nuisances. When the amplitude is a constant, however, the margin of using the proposed model over that of~\cite{Scala1996} and FHC is not significant. It is also worth noting that the FastNLS and FHC methods have low standard deviations compared to others, but this changes when the amplitude $\alpha$ is random.

We select the five best methods in Table~\ref{tbl:rmse-reuslts} and plot their IF estimates (from one MC run) in Figure~\ref{fig:estimation}. This figure qualitatively shows that the GHFS MLE method provides the best IF estimates compared to the other four methods. Furthermore, from the figure in the second row and third column, we can see that the 0.95 quantile produced by GHFS MLE is reasonable: the true IF always lies within the quantile. Moreover, we also see that the quantile at $t\approx0.5$ is decreasing until $t\approx 1.2$, while at the same time, the error between the mean estimate and the true IF is decreasing as well. Similarly, the quantile increases as the error increases around $t\in[1.2, 2.0]$. In contrast, the figures in the third and fourth rows and the third column show that the methods ``GHFS MLE on~\eqref{equ:old-ekf-ss}'' and KPT MLE give erroneous and overconfident quantiles. \looseness=-1

Table~\ref{tbl:learnt-params} shows the estimated model parameters of the GHFS MLE method from the same MC run plotted in Figure~\ref{fig:estimation}. We found that these estimated parameters are meaningful, especially for $\lambda$ and $b$ which determine the chirp prior's damping factor and stochastic volatility, respectively. When $\alpha$ is a constant, $\lambda$ and $b$ are estimated to have small values because the true chirp has no damping nor randomness. When $\alpha(t)=\expp^{-0.3 \,t}$, the estimated $\lambda\approx0.3$ is very close to the true damping value. When $\alpha$ is random, the estimated $b$ is no longer small, because it needs to account for the stochastic volatility of the randomised chirp. As for $V$'s parameters $\ell$ and $\sigma$, their estimated values do not change significantly with different forms of $\alpha$. This result is desired because the chirp amplitude $\alpha$ does not affect the chirp IF by definition. 

We next test if the proposed scheme also outperforms other methods when dealing with harmonic chirp signals. To do so, we set $J=3$ in~\eqref{equ:toymodel}, and all the other experiment settings remain unchanged. However, the GHFS method here is computationally demanding, since the Gauss--Hermite quadrature does not cope well with the increased state dimensionality (see, Section~\ref{sec:harmonic-chirp}), hence, we replace it by the cubature Kalman filter and smoother (CKFS). We compare our scheme to FastNLS, FHC, and KPT MLE which can straightforwardly tackle the harmonic chirp signal and are also the methods that stand out in Table~\ref{tbl:rmse-reuslts}. 

The RMSE results for the harmonic $J=3$ experiments are shown in Table~\ref{tbl:results-rmse-harmonic}. We see from the table that the proposed scheme still outperforms all the other methods. Moreover, there is a significant margin even when dealing with the constant $\alpha$; this is in contrast to the results for $J=1$ in Table~\ref{tbl:rmse-reuslts}. It is worth noting that although the FastNLS method has the best mean result for random $\alpha$, its median and minimum are not the best. 

In Figure~\ref{fig:estimation-harmonic}, we plot the fundamental IF estimates of one MC run in the harmonic IF estimation experiment. This figure shows that the CKFS MLE using the proposed model substantially outperforms the other methods, and that the method produces a reasonable quantification of the uncertainty quantile. On the other hand, the competing methods all start to diverge at around $t=1.4$~s. These methods diverge because we see from the top figure that the chirp amplitudes are relatively small for around $t\geq 1.4$~s due to the random perturbation.

Table~\ref{tbl:complexity} compares the computational complexities of the most important methods used in the experiment. From the table, we see that ANF and the state-space based methods (i.e., EKFS, GHFS, CKFS, and KPT) stand out, as their complexities are linear in the number of measurements $T$ which is usually the dominating factor in signal processing. On the other hand, the state-space based methods are quadratic or cubic in the number of harmonics $J$, since the state dimension increases as $J$ increases. Among the state-space based methods, the complexity of KPT is marginally better. The complexities of FastNLS and FHC depend on their grid-search resolutions and how their time-windows are selected. Based on the suggestions in~\cite{Jensen2017} and~\cite{Nielsen2017}, FastNLS and FHC are approximately sub-quadratic and sub-cubic, respectively, in the time-window length $\widehat{T}$, and this is to be multiplied with the number of time-windows $W$.

\begin{table}[h!]
	\centering
	\caption{Comparison of the computationally complexities. The notation $W$ and $\widehat{T}$ stand for the number of time windows and the window length, respectively. The notation $K$ and $F$ are the number of grid points for searching the chirp rate and the fundamental frequency, respectively. The approximations for FastNLS and FHC are obtained by using the recommended grid setting $F = O(\widehat{T} \, J)$ and $K = O(\widehat{T} \, J)$ as per~\cite{Jensen2017}. The methods below the dashed line use the proposed model, and recall that $d_v$ is the dimension of the prior for $V$. }
	\label{tbl:complexity}
	\begin{tabular}{@{}ll@{}}
		\toprule
		Method           & Computational complexity       \\ \midrule
		ANF~\cite{Niedzwiecki2011}              & $\mathcal{O}(J \, T)$                      \\
		EKFS on~\eqref{equ:old-ekf-ss}~\cite{Scala1996} &  $\mathcal{O}((2 \, J + d_v)^2 \, T)$                              \\
		GHFS on~\eqref{equ:old-ekf-ss}~\cite{Scala1996} &  $\mathcal{O}((2 \, J + d_v)^3 \, T)$                  \\
		FastNLS~\cite{Nielsen2017}          & $\begin{aligned}[t]
			&\mathcal{O}\bigl(W \, (F\log(F) + F \, L)\bigr)\\
			&\approx \mathcal{O}\bigl(W (J \, \widehat{T} \log(J \, \widehat{T}) + J \, \widehat{T} \, L)\bigr)
		\end{aligned}$\\
		FHC~\cite{Jensen2017}              & $\mathcal{O}(W \, K \, F \log(K \, F)) \approx \mathcal{O}\bigl(W \,J^2 \, \widehat{T}^2 \log(J \, \widehat{T})\bigr)$ \\
		KPT~\cite{Shi2017}              & $\mathcal{O}((J + 2)^2 \, T)$              \\
		\cdashlinelr{1-2}
		EKFS             & $\mathcal{O}((2 \, J + d_v)^2 \, T)$       \\
		GHFS/CKFS             & $\mathcal{O}((2 \, J + d_v)^3 \, T)$       \\\bottomrule
	\end{tabular}
\end{table}

\subsection{Gravitational wave frequency estimation}
\label{sec:gw}
We use the proposed model to estimate the frequency of a gravitational wave (GW) signal. The GW signal is taken from a well-known binary black hole collision event GW150914 in 2016~\cite{LIGO2016}. This signal is evenly sampled at $16,384$~Hz and contains 3,441 measurements. Details regarding this data are found in~\cite{LIGO2016}. To estimate the chirp amplitude and IF, we use the GHFS MLE method which has the best RMSE statistic in the synthetic experiment. Our method works out-of-the-box for this data, since the model parameters are automatically inferred from the data via MLE.

\begin{figure}[t!]
	\centering
	\includegraphics[width=.99\linewidth]{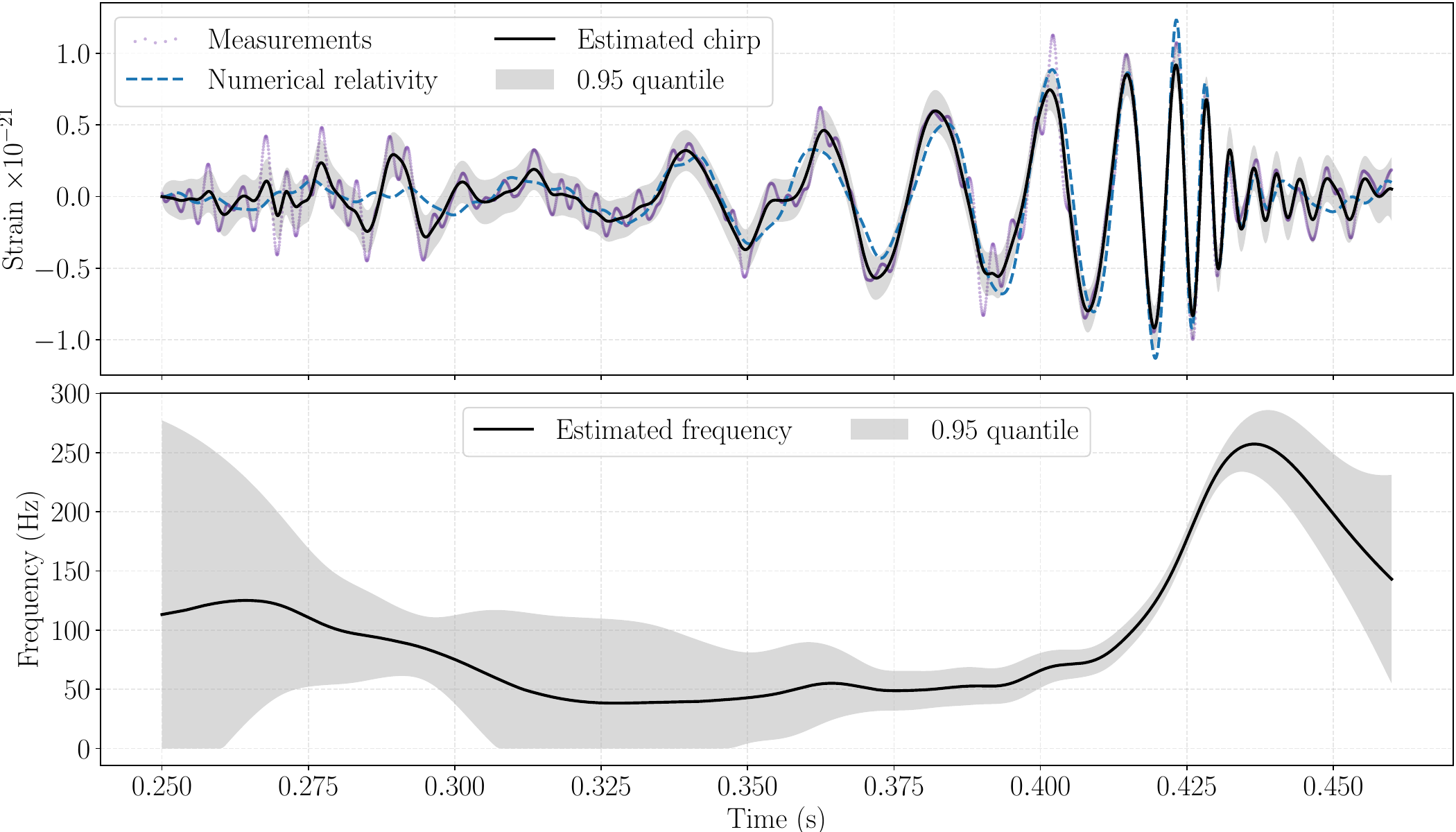}
	\caption{Gravitational wave frequency (and chirp) estimation by using the GHFS MLE method on the model in~\eqref{equ:sde-chirp}. Estimated model parameters are $\lambda=1.59\times 10^1$, $b=2.12$, $\delta=7.78\times 10^{-15}$, $\ell=4.21\times 10^{-2}$, $\sigma=1.30\times10^{2}$, and $m^V_0=3.91\times 10^{-4}$.}
	\label{fig:gw}
\end{figure}

\begin{figure}[t!]
	\centering
	\includegraphics[width=.99\linewidth]{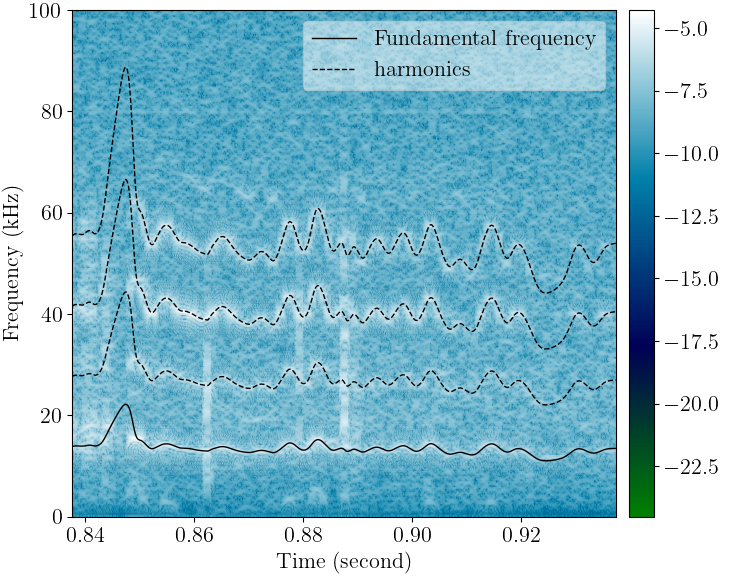}
	\includegraphics[width=.99\linewidth]{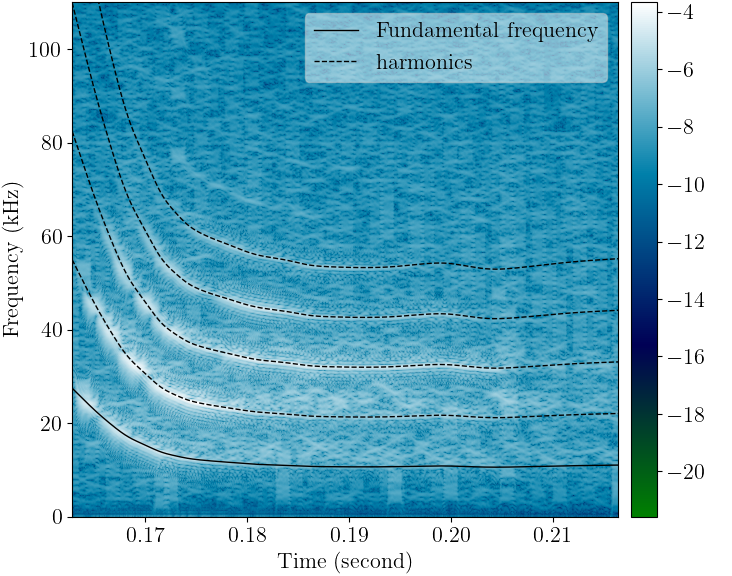}
	\caption{Estimation of the instantaneous fundamental and harmonic frequencies of two calls from \textit{Myotis myotis} (top) and \textit{Eptesicus nilssonii} (bottom). The unit of the colour bar is the $\log_{10}$ decibel. }
	\label{fig:bats}
\end{figure}

The GW chirp and frequency estimates are shown in Figure~\ref{fig:gw}. From this figure, we see that the frequency slowly increases from around $50$~Hz at $t\approx 0.35$~s to $80$~Hz at $t\approx 0.4$~s. Then, starting from $t\approx 0.4$~s, the frequency sharply increases to around $250$~Hz until $t\approx 4.3$~s, followed by a sharp decrease from $t\geq 4.3$. This estimation result makes sense, as it follows the theoretical prediction from the GW model of merging binary black holes~\cite{LIGO2016}. Moreover, the estimated frequency is close to the estimate provided in~\cite[][Fig. 1, left bottom]{LIGO2016}. The quantile indicates that the estimator is confident in the region $t\in[0.4, 0.43]$ which is the most interesting section. \looseness=-1

We would like to note that astronomers are typically more interested in visualising the spectrogram of the GW signal than tracking the IF. Thus, we do not claim that our method is a practical approach for profiling GW signals compared to what is currently used. 

\subsection{Analysing bat calls}
\label{sec:bat-calls}
To show that our method works for real-world chirp signals that have harmonic frequencies, we apply the method to the sounds of two European bat calls, \textit{Myotis myotis} and \textit{Eptesicus nilssonii}. These sounds are sampled at $250,000$~Hz, and more details of the data are found at \url{avisoft.com/batcalls}. We plot the spectrogram of the two sounds in Figure~\ref{fig:bats}, and we can clearly see that \textit{Myotis myotis} and \textit{Eptesicus nilssonii} have four and five dominant harmonics, respectively. To track these frequencies, we apply the CKFS MLE method which has the best overall performance shown in Table~\ref{tbl:results-rmse-harmonic}.

The results are shown in Figure~\ref{fig:bats}. We see that the estimated fundamental IFs accurately track the ground truth plotted in the spectrograms, for both sounds. The harmonic IFs computed by the estimated fundamental IFs are also close to the truth. It is worth remarking that estimating the fundamental IF for \textit{Myotis myotis} (top figure) is challenging, because the IF changes rapidly in time, and there are missing sections (e.g., around $t=0.845$~s) and distortions in the signal due to poor sound quality.\looseness=-1

In this experiment, the sound signal of \textit{Myotis myotis} has $25,335$ samples. On a standard personal computer (Intel Core i9-10900K, Ubuntu 20.04, and Python JAX implementation), the CKFS method with our model takes around 0.8 seconds. This shows that the proposed scheme is efficient enough to be used in practice.

\section{Conclusions}
\label{sec:conclusion}
In this paper, we have designed a hierarchical Gaussian process model for joint modelling of chirp signals and their instantaneous frequencies. This enables us to estimate a wide class of chirp signals and their instantaneous frequencies in continuous-time. In order to carry out the estimation and computation in practice, the model is represented in a non-linear stochastic differential equation, and the posterior distribution is computed using stochastic filtering and smoothing methods (e.g., sigma-points filters and smoothers). Moreover, the model parameters which control the characteristics of the chirp and IF, can be determined via maximum likelihood estimation and automatic differentiation. This makes the method an out-of-the-box approach for processing signals. To characterise the mean squared estimation error of the proposed scheme, we computed a Cram\'{e}r--Rao lower bound and a theoretical upper bound for it. Lastly, the experimental results show that the proposed model significantly outperforms a number of state-of-the-art methods, and that the model is applicable to real data (e.g., gravitational wave and bat calls) as well.

\section*{Acknowledgement}
The authors' contributions are given as follows. Zheng Zhao came up with the idea, did all the theoretical analysis and the experiments, and wrote the initial manuscript. Simo S\"{a}rkk\"{a} gave useful comments and helped revising the paper. Jens Sj\"{o}lund and Thomas Sch\"{o}n shaped the narrative of the paper and contributed in writing the final manuscript. 

\appendices

\section{~}
\label{append:matern32}
The explicit discretisation of the Mat\'{e}rn $3 \, / \, 2$ SDE in~\eqref{equ:disc-lcd} is given by
\begin{equation*}
	\expp^{\Delta_k \, M} = \expp^{-\Delta_k \, \gamma}
	\begin{bmatrix}
		1 + \Delta_k \, \gamma & \Delta_k               \\
		-\Delta_k \, \gamma^2  & 1 - \Delta_k \, \gamma
	\end{bmatrix},
\end{equation*}
\begin{equation*}
	\begin{split}
		&\Lambda(\Delta_k) \coloneqq \int^{\Delta_k}_0 \expp^{(\Delta_k - s) \, M} \, N \, N^\trans \, \big(\expp^{(\Delta_k - s) \, M}\big)^\trans \diff s \\
		&=
		\begin{bmatrix}
			\sigma^2 - \beta \, (2 \, \eta + 2 \, \eta^2 + 1) & 2 \, \Delta_k^2 \, \gamma^3 \, \beta                            \\
			2 \, \Delta_k^2 \, \gamma^3 \, \beta              & \gamma^2 \, (\sigma^2 + \beta \, (2 \, \eta - 2 \, \eta^2 - 1))
		\end{bmatrix},
	\end{split}
\end{equation*}
where $\gamma\coloneqq \sqrt{3} \, / \, \ell$, $\beta \coloneqq \sigma^2 \exp(-2 \, \eta)$, and $\eta \coloneqq \Delta_k \, \gamma$.

\section{~}
\label{append:baseline}
Parameters and settings of the baseline methods in Section~\ref{sec:experiment-toy} are given as follows.
\begin{itemize}
	\item The Hilbert transform method uses a cascaded second-order forward-backward digital filter to preprocecss the noisy measurements. The filter uses an 8-th order Butterworth design with a critical frequency of $18$~Hz.
	\item The spectrogram method uses the same filtering procedure as in the Hilbert transform method to preprocess measurements. We choose the cosine time window of length 450 and 449 overlaps.
	\item The (pilot) adaptive notch filter is from~\cite[][Table II]{Niedzwiecki2011}. Its parameters ($\mu=0.015$, $\gamma_w = \mu^2 / 2$, and $\gamma_\alpha=\mu \, \gamma_w / 4$ using the notation in~\cite{Niedzwiecki2011}) are selected according to the guidance provided in~\cite[][pp. 2032]{Niedzwiecki2011}.
	\item The polynomial method fits the IF by a 10-th order polynomial, the coefficients of which are determined via maximum likelihood estimation and implemented using Levenberg--Marquardt optimisation.
	\item The FastNLS and FHC methods use a window length of 300 samples and 298 overlaps.
\end{itemize}

\def\bibfont{\small}

\bibliographystyle{IEEEtran}
\bibliography{refs}

\begin{IEEEbiography}[{\includegraphics[width=1in,height=1.25in,clip,keepaspectratio]{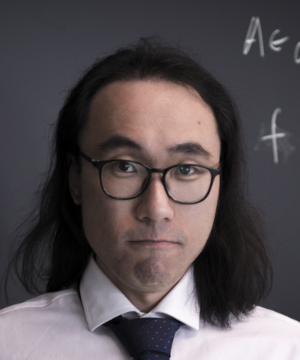}}]{Zheng Zhao}
	received his D.Sc. degree from Aalto University, Espoo, Finland in 2021. He is currently working as a WASP postdoctoral researcher in the Department of Information Technology, Uppsala University, Sweden. His research interests include stochastic filtering, stochastic differential equations, Gaussian processes, signal processing, and machine learning.
\end{IEEEbiography}

\begin{IEEEbiography}[{\includegraphics[width=1in,height=1.25in,clip,keepaspectratio]{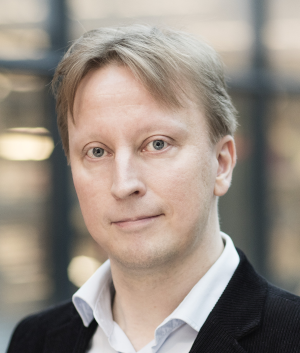}}]{Simo S\"{a}rkk\"{a}}
	received the M.Sc. (Tech.) and D.Sc. (Tech.) degrees from the Helsinki University of Technology, Espoo, Finland, in 2000 and 2006, respectively. He is currently an Associate Professor with Aalto University, Espoo, and an Adjunct Professor with the Tampere University and the LUT University. He is also affiliated with the Finnish Center of Artificial Intelligence (FCAI). His research interests include machine learning and multisensor data processing systems with applications in health and medical technology, location sensing, and inverse problems. 
\end{IEEEbiography}

\begin{IEEEbiography}[{\includegraphics[width=1in,height=1.25in,clip,keepaspectratio]{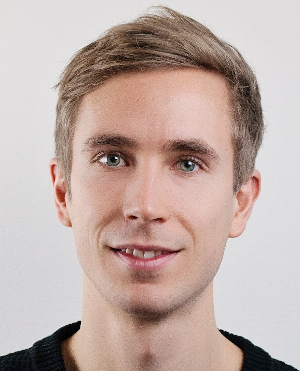}}]{Jens Sj\"{o}lund}
	received the B.Sc. and M.Sc. degree in engineering physics from the Royal Institute of Technology (KTH), Stockholm, Sweden, in 2010 and 2012, respectively, and the Ph.D. degree in biomedical engineering from Link\"{o}ping University, Link\"{o}ping, Sweden, in 2018. He is currently a WASP assistant professor of artificial intelligence with the Department of Information Technology, Uppsala University, Uppsala, Sweden. Prior to joining Uppsala University in 2021, he was a senior research scientist at Elekta, where he established a track record as a prolific inventor with more than 25 patent applications.
\end{IEEEbiography}

\begin{IEEEbiography}[{\includegraphics[width=1in,height=1.25in,clip,keepaspectratio]{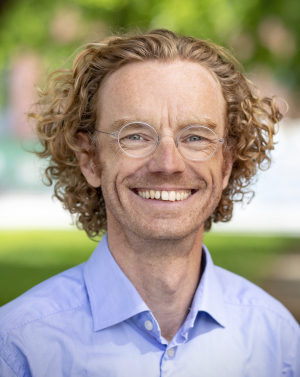}}]{Thomas Sch\"{o}n}
	received the B.Sc. degree in business administration and economics, the M.Sc. degree in applied physics and electrical engineering, and the Ph.D. degree in automatic control from Link\"{o}ping University, Link\"{o}ping, Sweden, in January 2001, September 2001, and February 2006, respectively. He is currently the Beijer Professor of artificial intelligence with the Department of Information Technology, Uppsala University, Uppsala, Sweden. In 2018, he was elected to The Royal Swedish Academy of Engineering Sciences (IVA) and The Royal Society of Sciences, Uppsala, Sweden. He was the recipient of the Tage Erlander Prize for natural sciences and technology in 2017 and the Arnberg Prize in 2016, both awarded by the Royal Swedish Academy of Sciences (KVA).
\end{IEEEbiography}

\end{document}